\newtheorem{assumption}{Assumption}
\newtheorem{theorem}{Theorem}
\newtheorem{lemma}[theorem]{Lemma}
\newtheorem{proposition}{Proposition}
\newcommand{\modelname}{LM-Steer}
\NewDocumentCommand{\heng}
{ mO{} }{\textcolor{red}{\textsuperscript{\textit{Heng}}\textsf{\textbf{\small[#1]}}}}
\NewDocumentCommand{\hengsolved}
{ mO{} }{\textcolor{red}{\textsuperscript{\textit{Heng}}\textsf{\sout{\small[#1]}}}}
\NewDocumentCommand{\hengarchived}
{ mO{} }{}
\NewDocumentCommand{\chihan}
{ mO{} }{\textcolor{blue}{\textsuperscript{\textit{chi}}[#1]}}
\NewDocumentCommand{\xingyao}
{ mO{} }{\textcolor{orange}{\textsuperscript{\textit{Xingyao}}[#1]}}
\NewDocumentCommand{\yi}
{ mO{} }{\textcolor{brown}{\textsuperscript{\textit{yi}}[#1]}}
\NewDocumentCommand{\chenkai}
{ mO{} }{\textcolor{purple}{\textsuperscript{\textit{chenkai}}[#1]}}
\title{Word Embeddings Are Steers for Language Models}
\author{Chi Han, Jialiang Xu, Manling Li, Yi Fung, Chenkai Sun, \\ {\bf Nan Jiang, Tarek Abdelzaher, Heng Ji} \\
University of Illinois Urbana-Champaign \\
  \texttt{\{chihan3, jx17, manling2, yifung2, chenkai5} \\
  \texttt{nanjiang, zaher, hengji\}@illinois.edu} \\}
\begin{document}
\maketitle
\begin{abstract}
    Language models (LMs) automatically learn word embeddings during pre-training on language corpora.
Although word embeddings are usually interpreted as feature vectors for individual words, their roles in language model generation remain underexplored.
In this work, we theoretically and empirically revisit output word embeddings and find that their linear transformations are equivalent to steering language model generation styles.
We name such steers \modelname{}s and find them existing in LMs of all sizes.
It requires learning parameters equal to 0.2\% of the original LMs' size for steering each style.
On tasks such as language model detoxification and sentiment control, \modelname{}s can achieve comparable or superior performance compared with state-of-the-art controlled generation methods while maintaining a better balance with generation quality.
The learned \modelname{} serves as a lens in text styles: it reveals that word embeddings are interpretable when associated with language model generations and can highlight text spans that most indicate the style differences.
An \modelname{} is transferrable between different language models by an explicit-form calculation.
One can also continuously steer LMs simply by scaling the \modelname{} or compose multiple \modelname{}s by adding their transformations.
Our codes are publicly available at \url{https://github.com/Glaciohound/LM-Steer}.
\footnote{Please be advised that this paper contains potentially controversial results and examples to some readers, included solely for research purposes to explore model capabilities.}
\end{abstract}

\begin{figure}[t!]
\centering
\includegraphics[width=0.5\textwidth]{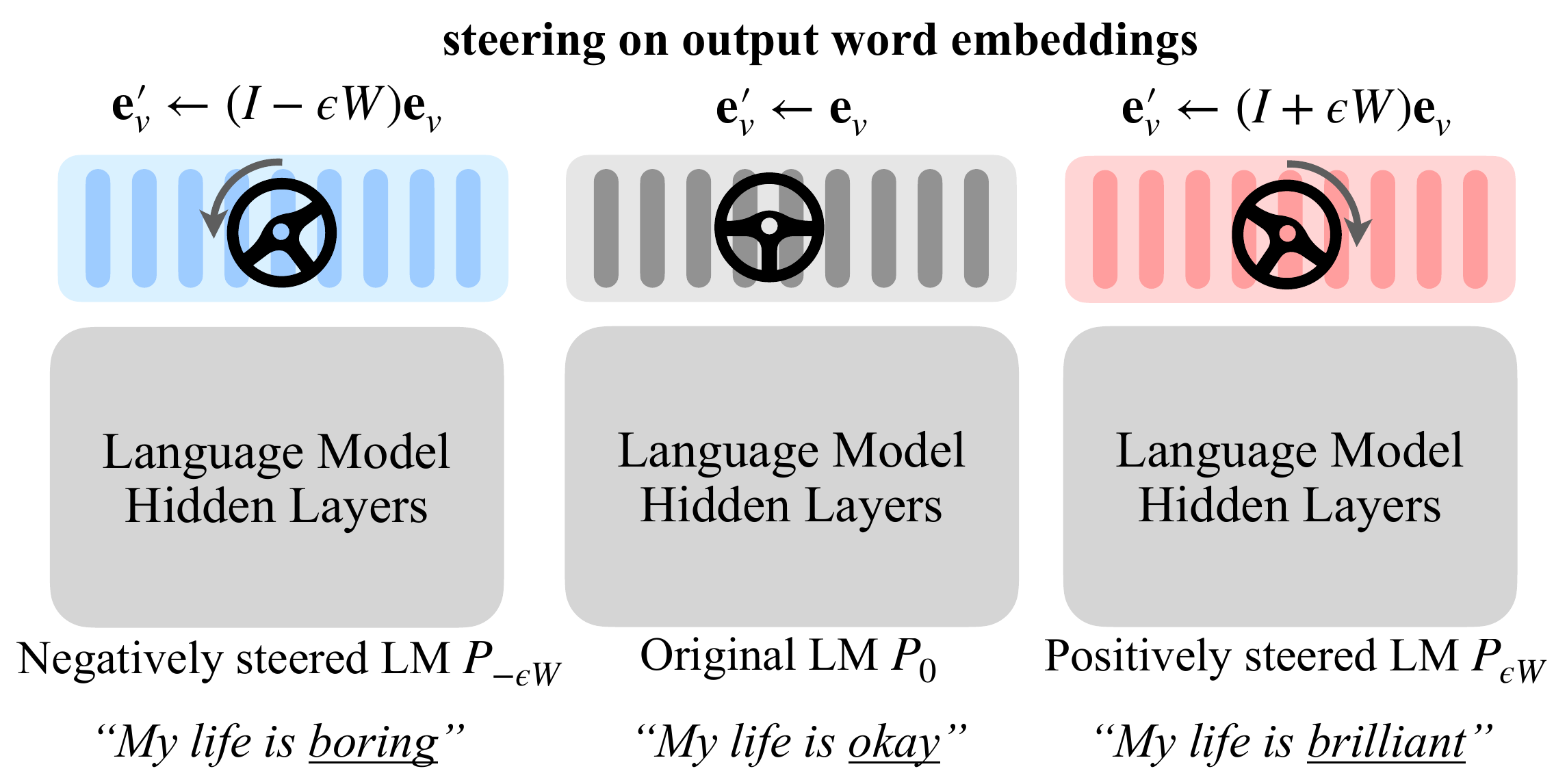}
\caption{We find hidden steers in \textbf{output word embeddings}. By linearly transforming word embeddings, language model generations are ``steered'' toward different style polarity and levels.}
\label{fig:teaser}
\end{figure}

\begin{figure*}[t!]
\centering
\includegraphics[width=\textwidth]{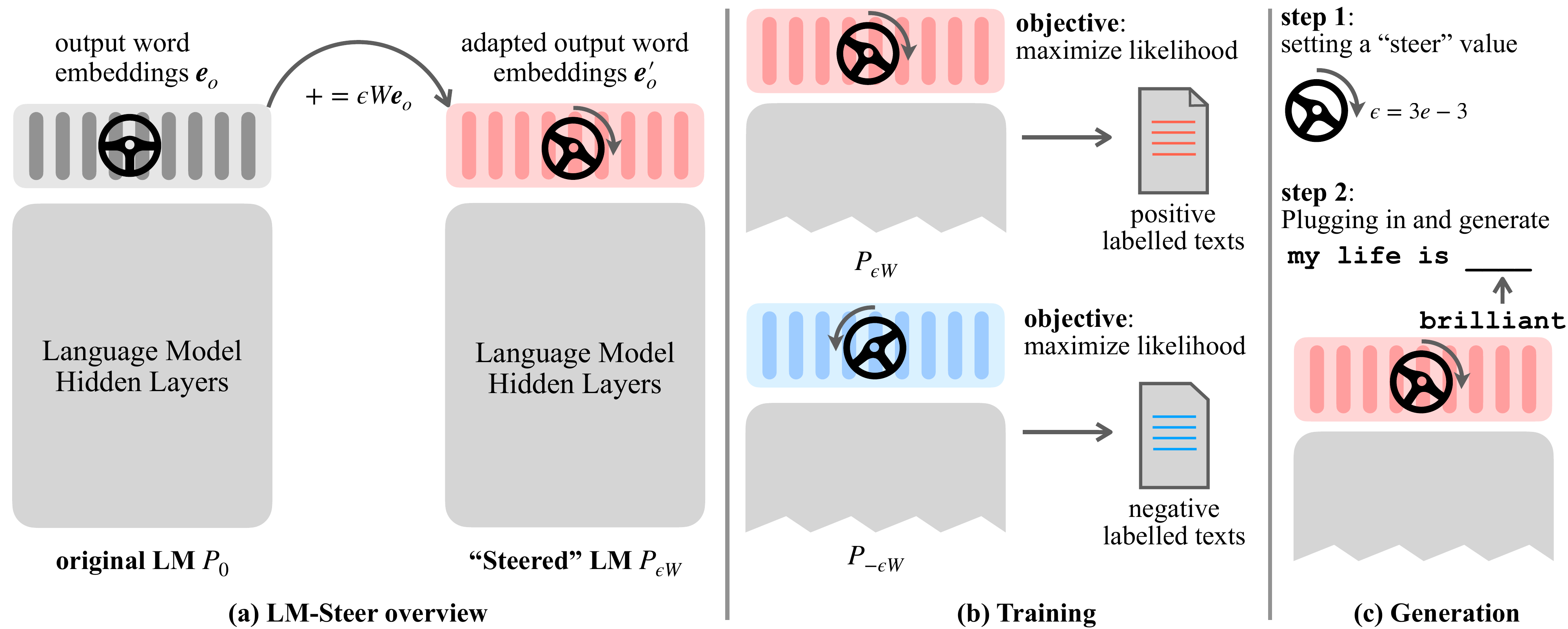}
\caption{An overview of \modelname{}. \textbf{(a)}: \modelname{} applies a linear factor $\epsilon W \mathbf{e}_v$ to each word embedding for language model conditioning. \textbf{(b)}: During training, we use a positively steered model $P_{\epsilon W}$ to maximize likelihood on positively labeled texts, and vise versa. \textbf{(c)}: For generation, one only needs to specify a steering value $\epsilon$, and then proceed with normal decoding.}
\label{fig:overview}
\end{figure*}
\section{Introduction}
\label{sec:introduction}

In recent years, language models (LMs) have significantly advanced various natural language processing (NLP) tasks such as machine translation, sentiment analysis, schema induction, summarization, and sociocultural understanding~\citep{brown2020language, kojimalarge,hierarchicalschema2023,radford2018improving, openai2023gpt4, fung-etal-2023-normsage, fung2024massively}.
Their \textbf{output word embeddings} are learned automatically to calculate word output likelihoods during pre-training on language corpora. Typically, the dot product $\mathbf{c}^\top\mathbf{e}_v$ between a computed context vector $\mathbb{c}$ and a learnable output word embedding $\mathbf{e}_v$ for token $v$ is usually used as the word logit. The word output probability is defined as the softmax over all word logits:
\begin{equation}
    P(v|\mathbf{c}) = \frac{\exp(\mathbf{c}^\top\mathbf{e}_v)}{\sum_{u\in\mathcal{V}}\exp(\mathbf{c}^\top\mathbf{e}_u)},
\end{equation}
where $\mathcal{V}$ is the whole vocabulary.
While being a fundamental topic in natural language processing, previous work on interpreting them is usually focused at the word level, such as their semantic information~\citep{csenel2018semantic}, word senses~\citep{hewitt-etal-2023-backpack}, and analogical relations~\cite{mikolov2013efficient, park-etal-2017-rotated}. However, as the word embeddings are optimized for generation loss during pre-training, the learned embedding space should be closely associated with LMs' generation distributions. In this work, we propose to study the roles that word embeddings play in LM generation, which remains an underexplored topic, and analyze a simple while effective LM steering method \modelname{}.

We start by examining the theoretical relation between word embeddings and LM generation distributions.
We find that linear transformations in output word embeddings are equivalent to LM generation style changes.
This motivates our empirical investigation on a lightweight and simple steering method, \modelname{}, to steer LM generation distribution flexibly and transparently.
\modelname{} deploys a $d\times d$ learnable linear transformation $W$ on the output word embeddings, where $d$ is the embedding dimension.
Specifically, the embeddings $\mathbf{e}_v$ are replaced with $\mathbf{e}_v + \epsilon W \mathbf{e}_v$. Here, $\epsilon$ acts as a ``steering value'' to provide a simple control on steering polarity and intensity.
Inherently, \modelname{} discovers hidden dimensions in word embeddings that are associated with text styles. By transforming these dimensions, \modelname{} affects LMs' interpretation of words and, consequently, generation distributions.

Empirically, we find such \modelname{}s exist prevalently in LMs of all sizes, thanks to its general formulation and ignorance of model architectures. On tasks such as language model detoxification and sentiment control, it achieves comparable or superior performance compared to the state-of-the-art controlled generation baselines.
We also find multiple merits of \modelname{} both as a steering tool and lens for inspecting the relation between word embeddings and language model generation. 
An \modelname{} can \textbf{highlight} text spans that best indicate a style in the full text.
A learned \modelname{} also makes word embedding dimensions \textbf{interpretable} by pointing out the dimensions closest related to a style, revealing what kinds of words contribute to or contradict a style.
\modelname{} is both \textbf{parameter efficient} and \textbf{data-efficient}: on GPT2-large, it learns only parameters only 0.2\% the size of the original model (9\% of the size of LoRA~\citep{hu2021lora}, a parameter-efficient fine-tuning method) and able to text sentiments on dozens of sentences.
A learned \modelname{} is \textbf{transferable} to other LMs with different sizes by explicit-form calculation without additional training. 
Moreover, \modelname{} theoretically enables both \textbf{continuous} and \textbf{compositional} control. This allows for dealing with diverse and nuanced situations, such as fine-grained personalized or customized generation, without re-training for each scenario.

\section{Related Work}
\label{sec:related}

\noindent\textbf{Understanding Word Embeddings}
Language models learn word embeddings for individual words automatically after pre-training. To understand them, \cite{mikolov2013efficient, allen2019analogies} discover linear translational relations among embeddings, while \cite{park-etal-2017-rotated, rothe2016word, ethayarajh2019rotate} examine their rotational relations. Some other work examines the semantic information ~\citep{csenel2018semantic, lund1996producing, jang2017elucidating, csenel2022learning, murphy2012learning, faruqui2014retrofitting} or word senses \cite{panigrahi-etal-2019-word2sense,hewitt-etal-2023-backpack} of individual words in their embeddings. These efforts, however, have mostly focused on and evaluated word-level interpretations of word embeddings~\cite {chang2009reading} while we first investigate their relations with language model generations.

\noindent\textbf{Control of Language Models}
has been of growing interest in recent years, motivated by the increasing capabilities of LMs \cite{li-etal-2023-defining}.
This area originates from the need to leverage the generation capabilities of large language models while avoiding the need for time-consuming and costly retraining or fine-tuning.
Attempts include applying attribute classifiers or heuristic constraints at decoding time~\citep{kumar2022gradient, dathathriplug, liu2021dexperts, yang2021fudge}, treating the generation process as an optimization problem over the embedding or token sequences~\citep{kumar2021controlled}, or post-editing the output~\citep{li2018delete}. These techniques are often computationally expensive and rely on suitable external classifiers.
More recently, prompting-based control for large language models has received much attention, which, however, relies on the quality and availability of large language models~\citep{brown2020language, openai2023gpt4}, and may also necessitate the deliberate training~\citep{raffel2020exploring, zhou2023controlled}. It can also be challenging to design effective prompts for complex or nuanced control goals.
Parameter-efficient fine-tuning such as LoRA~\citep{hu2021lora} focuses on learning low-rank approximations of model parameters. However, this cannot achieve flexible and transferrable language model steering like ours.
Probably most closely related to our work are attempts to discover ``steering'' vectors or tokens~\citep{subramani2022extracting, li2021prefix}, and also similar work in image generation~\citep{jahaniansteerability,Hu2021}.
Different from our model, these efforts focus on other applications such as multi-task learning and sentence recovery, and the learned vectors are not shown to be transferrable or interpretable nor enable flexible control.

\noindent\textbf{Language Model Detoxification} 
Motivated by the goal to address the systematic biases embedded in language models, there are efforts in conducting language model de-biasing or de-toxification \citep{meade_empirical_2022,kaneko-etal-2022-debiasing}. Approaches span all aspects of the language model pipeline. A line of work focuses on automatically obtaining cleaner data \citep{barikeri-etal-2021-redditbias, webster2021measuring, dinan-etal-2020-queens}. Another line of work modifies the model workflow design to explicitly accommodate the bias factors \citep{webster2021measuring, 10.1162/tacl_a_00434, debiasinggradient2023,debiasing2023b, Adept2023}. The most related line of work to the herein proposed method involves manipulating embedding space such as Principle Component Analysis and Nullspace Projection \citep{liang-etal-2020-towards, bolukbasi2016man, ravfogel-etal-2020-null}. The evaluation in these settings \citep{kaneko-bollegala-2021-debiasing, nadeem-etal-2021-stereoset, nangia-etal-2020-crows} mostly consists of quiz-question checking for stereotypical misbeliefs. More related to our method are those mentioned in language model control~\citep{kumar2022gradient, dathathriplug, liu2021dexperts, yang2021fudge, kumar2021controlled}, which constrains or guides text generation according to a classifier.
A unique contribution in our work is that the learned \modelname{} can be transferred to detoxify other off-the-shelf language models without a costly training process.

\section{\modelname{}: Revealing Hidden Steers in Word Embeddings}
\label{sec:method}

As a theoretical motivation, we first show an informal theorem relating output word embeddings with generation styles.
We leave the formal statement as well as the proof to Appendix~\ref{appsec:proof} and only present an intuitive interpretation.

\begin{theorem}
\label{thm:informal}
(Informal) With certain assumptions, shifting styles in language models is equivalent to a linear transformation in word embedding space.
\end{theorem}

Inspired by this discovery, we propose \modelname{} to apply a linear transform in the output word embedding space. \modelname{} is conceptually simple and straightforward to implement.
An illustration of \modelname{} is presented in Figure~\ref{fig:overview}(a).
Specifically, we assume a language model with fixed parameters. We replace its each output word embeddings $\mathbf{e}_v$ with 
\begin{equation}
\mathbf{e}_v'=\mathbf{e}_v+\epsilon W\mathbf{e}_v=(I+\epsilon W)\mathbf{e}_v,
\end{equation}
and call the resulting language model $P_{\epsilon W}$ a ``\modelname{}ed model''. Here, the ``steer matrix'' $W$ is the only learnable parameter in \modelname{}, and $\epsilon$ is an adjustable scalar indicating the polarity and intensity of the ``steering value''. Without loss of generality, we arbitrarily pick a small value $\epsilon_0=1e-3$ as the default steering value.\footnote{Using an arbitrary $\epsilon_0$ possess the same representation power as any other $\epsilon$, as there always exists $W'=\frac{\epsilon}{\epsilon_0}W$ so that $\epsilon_0 W' = \epsilon W$ holds.} We use $P_{\epsilon W}$ to denote the steered language model.
Figure~\ref{fig:overview}(b, c) shows the training and generation process of \modelname{}. During training, we use the positively steered model $P_{\epsilon W}$ to fit the positively labeled texts, with maximal likelihood as the training objective. When negative texts are available, we also fit them with $P_{-\epsilon W}$. When generating with \modelname{}, the user only needs to specify a steering value $\epsilon$ and then decode the language model.
More details are in Appendix~\ref{appsec:implementation}.
Intuitively explaining, \modelname{} matrix $W$ learns to identify word embedding dimensions that are best associated with a target style and manipulate among those dimensions to achieve language rewording. In Section~\ref{subsec:interpretability}, we show this enables an interpretation of word embeddings by analyzing the learned $W$.

We also theoretically and empirically compare \modelname{} against a simplified version, a \textbf{(soft) word blacklist (SWB)}: learning a global logit offset is applied to each token candidate after the original logits are computed. As we demonstrate in Appendix~\ref{appsec:blacklist}, adding a (learnable) vector to context vectors $\boldsymbol{c}$ achieves a similar effect with SWB. We also further prove that \modelname{} is theoretically expressive of any distribution shift, while a SWB is unable to do so. In \ref{sec:applications}, we show that SWB indeed yields inferior performance than \modelname{}.

\section{Steering Language Model Generation}
\label{sec:applications}

\begin{table*}[t!]
\centering
\linespread{1}

\captionsetup{aboveskip=3pt}\captionsetup{belowskip=0pt}
\setlength{\tabcolsep}{1.5mm}{
\begin{tabular}{lccccccc}
\toprule

\multirow{2}{*}{\textbf{Model}} & \textbf{Backbone} & \multicolumn{2}{c}{\textbf{Toxicity}$\downarrow$} & \textbf{Fluency} & \multicolumn{2}{c}{\textbf{Diversity}$\uparrow$} \\
 & \textbf{Size} & Max. toxicity & Toxicity prob. & Output ppl.$\downarrow$ & Dist-1 & Dist-2 & Dist-3 \\

 \midrule
GPT-2 (original)
& 117M & 0.527 & 0.520 & 25.45 & 0.58 & 0.85 & 0.85 \\

\midrule

PPLM (10\%) & 345M
& 0.520 & 0.518 & 32.58 & 0.58 & 0.86 & 0.86 \\

DAPT & 117M
& 0.428 & 0.360 & 31.21 & 0.57 & 0.84 & 0.84 \\

GeDi & 1.5B
& 0.363 & 0.217 & 60.03 & 0.62 & 0.84 & 0.83 \\

DExperts$_{\text{base}}$ & 117M
& 0.302 & 0.118 & 38.20 & 0.56 & 0.82 & 0.83 \\

DExperts$_{\text{medium}}$ & 345M
& 0.307 & 0.125 & 32.51 & 0.57 & 0.84 & 0.84 \\

DExperts$_{\text{large}}$ & 762M
& 0.314 & 0.128 & 32.41 & 0.58 & 0.84 & 0.84 \\

PromptT5 & 780M
& 0.320 & 0.172 & 55.1 & 0.58 & 0.76 & 0.70 \\

MuCoLa & 762M
& 0.308 & 0.088 & 29.92 & 0.55 & 0.82 & 0.83 \\

LoRA & 762M
& 0.365 & 0.210 & 21.11 & 0.53 & 0.85 & 0.86 \\

Soft-Blacklist & 762M
& 0.270 & 0.154 & 18.28 & 0.53 & 0.81 & 0.83 \\

\midrule

\modelname{}$_{\text{base}}$ & 117M
& 0.296$_{\pm 0.018}$ & 0.129$_{\pm 0.012}$ 
 & 36.87  & 0.54 & 0.86 & 0.86 \\

\modelname{}$_{\text{medium}}$ & 345M
& \textbf{0.215}$_{\pm 0.015}$ & \textbf{0.059}$_{\pm 0.029}$
 & 43.56 & 0.56 & 0.83 & 0.84 \\

\modelname{}$_{\text{large}}$ & 762M
& 0.249$_{\pm 0.007}$ & 0.089$_{\pm 0.009}$ & 28.26 & 0.55 & 0.84 & 0.84 \\

\bottomrule

\end{tabular}

}
\caption{On language model detoxification, \modelname{} achieves best performance. $\pm$ denotes standard deviation on 3 random seeds.}
\label{tab:detoxification}
\end{table*}

\begin{figure}[h!]
\centering
\includegraphics[width=0.5\textwidth]{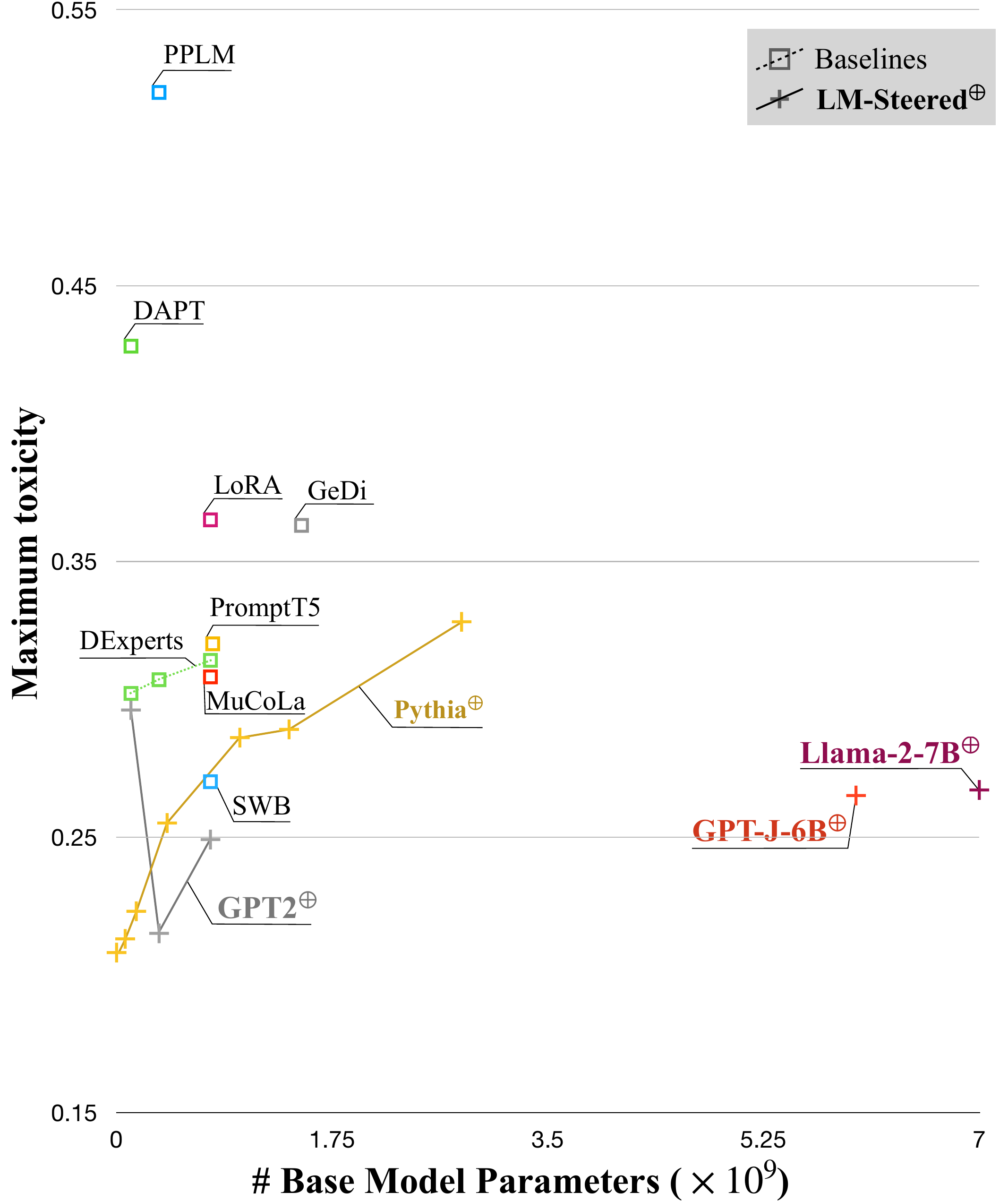}
\caption{Across base model sizes, \modelname{}ed GPT2 family, Pythia family, GPT-J and Llama-2-7B models (+) consistently outperform other baselines (($\square$)) on detoxification. X$^\oplus$ means an \modelname{}ed language model X.}
\vspace{-4mm}
\label{fig:detoxification}
\end{figure}
\begin{table*}[t!]
\centering
\setlength{\tabcolsep}{1.1mm}{
\begin{tabular}{lccc|ccc|ccc}
\toprule
& \textbf{\modelname{}} & \textbf{Tie} & \textbf{LoRA} & \textbf{\modelname{}} & \textbf{Tie} & \textbf{GPT-2} & \textbf{\modelname{}} & \textbf{Tie} & \textbf{DExperts} \\
\midrule
\textbf{Detoxified} & \textbf{19.0} & 69.5 & 11.5 & \textbf{24.5} & 56.5 & 19.0 & \textbf{24.0} & 56.5 & 19.5 \\
\textbf{Fluent} & \textbf{21.0} & 69.0 & 10.0 & 21.0 & 57.5 & \textbf{21.5} & \textbf{25.0} & 52.0 & 23.0 \\
\textbf{Topical} & \textbf{18.0} & 69.5 & 12.5 & \textbf{32.0} & 47.0 & 21.0 & \textbf{32.0} & 56.5 & 11.5 \\
\bottomrule
\end{tabular}
}
\caption{Human evaluation results by comparing with LoRA, GPT-2 and DExperts. \modelname{} wins out on most metrics while being comparable to GPT-2 on fluency.}
\label{tab:human_eval}
\end{table*}

\subsection{Language Detoxification}
\label{subsec:detoxification}

It is known that large pre-trained LMs might generate toxic content that appears in the pre-training distribution~\cite{sheng2019woman, gehman2020realtoxicityprompts}, such as inaccurate information, harmful stereotypes, and unethical content.
Language model detoxification is the task of mitigating or avoiding these generations in order to enable the safe usage of language models. We experiment on GPT2 family~\citep{radford2019language}, Pythia family~\citep{biderman2023pythia}, GPT-J~\citep{gpt-j} and Llama-2~\citep{touvron2023llama} as the backbone language models.

\textbf{Setting:}
Following~\cite{liu2021dexperts}, we use Jigsaw Unintended Bias in Toxicity Classification Kaggle challenge\footnote{\url{https://bit.ly/3cvG5py}} as the training dataset.
For evaluation, we use 10K nontoxic prompts from the RealToxicityPrompts dataset~\citep{gehman2020realtoxicityprompts}. We randomly generate 25 sentences of up to 20 tokens using nucleus sampling~\citep{holtzmancurious} with $p=0.9$. Then the toxicity scores (in range $[0, 1]$) of generations are evaluated using Perspective API~\footnote{\url{https://perspectiveapi.com}}. Two metrics are reported: the maximal toxicity of generations on each prompt averaged across prompts (``Avg. max. toxicity'') and the averaged probability of generating $>0.5$ toxicity (``Toxicity prob.''). We also evaluate generation quality in terms of fluency (perplexity score measured by a GPT2-large) and diversity (Dist-\{1, 2, 3\}: the portion of distinct \{1, 2, 3\}-grams). When decoding, we use a steering value of $5\epsilon_0$ for generation, selected based on the balance between generation fluency and task performance on the dev set in Appendix~\ref{appsec:ablation}.

\textbf{Baselines:}
\textbf{DExperts}~\citep{liu2021dexperts} trains positive and negative label classifiers and uses the difference in two classifiers' scores to offset the LM's original logits.
\textbf{DAPT}~\citep{gururangan2020don} simply further pretrains the language model on the non-toxic subset (filtered by Perspective API) of OpenWebText Corpus (OWT)~\citep{gokaslan2019openwebtext}.
\textbf{PPLM}~\citep{dathathriplug} learns to use the gradients of the label classifier to update the LM's hidden representations.
\textbf{GeDi}~\citep{krause2021gedi} is a model that uses the Bayesian rule for class-conditioned LM generation.
\textbf{MuCoLa}~\citep{kumar2022gradient} models the text generation as an optimization problem regarding the classifier scores.
\textbf{PromptT5}~\citep{raffel2020exploring} T5 is a pre-trained LM optimized for prompt-based task solving, and we use ``Complete this sentence so that it embodies a \{positive/negative\} sentiment:'' to prompt T5. 
\textbf{LoRA}~\citep{hu2021lora} trains low-rank approximations of parameter matrices to achieve parameter-efficient fine-tuning. Finally, we compare with the soft blacklist baseline discussed in Section~\ref{sec:method}.

\textbf{Results and Analysis:}
We present the results in Table~\ref{tab:detoxification}. Despite the simple design, \modelname{} achieves the best detoxification scores on both metrics, reducing Avg. max. toxicity by $>6\%$ absolute percentages. It is also noteworthy that \modelname{} also demonstrates reasonable balance on fluency (2nd lowest perplexity score) and diversity (same-level Dist-k scores with baselines). Figure~\ref{fig:detoxification} further shows the detoxification versus baseline size, where \modelname{}+\{GPT2 family, Pythia family, GPT-J and Llama-2\} uniformly outperforms baselines where of all sizes, where more numerical results can be found in Appendix~\ref{appsec:pythia} and ~\ref{sec:gpt-j}. Incorporation of \modelname{} with LoRA, instruction following, and full embedding tuning are explored in Appendix~\ref{appsec:plus_lora}, \ref{appsec:plus_instruction} and \ref{appsec:plus_tuning}, respectively.

\textbf{Human Evaluation} We compare LM-steer with LoRA, DExperts, and GPT-2 in a pairwise manner with human annotators. Specifically, we follow the practice in DExperts and ask four student human annotators to compare 50 generations from LM-steer and the baseline from 3 perspectives: detoxification, fluency, and being topical to the prompt. The results are as follows. We can see that LM-steer is ranked significantly less toxic and more topical than the baseline. It performs similarly to DExperts and GPT-2 but better than LoRA in terms of fluency.

\begin{table*}[t!]
\centering
\linespread{1}


\setlength{\tabcolsep}{1.4mm}{
\begin{tabular}{lcccccccc}
\toprule

\multirow{3}{*}{\textbf{Target}} & \multirow{3}{*}{\textbf{Model}} & \multicolumn{3}{c}{\textbf{Sentiment Positivity} / \%} & \textbf{Fluency} & \multicolumn{2}{c}{\textbf{Diversity}$\uparrow$}  \\

& & Positive & Neutral & Negative & \multirow{2}{*}{Output ppl.$\downarrow$} & \multirow{2}{*}{Dist-1} & \multirow{2}{*}{Dist-2} & \multirow{2}{*}{Dist-3} \\
& & prompts & prompts & prompts \\

\midrule

\multirow{12}{*}{\textbf{Positive}$\uparrow$} 

& \modelname{}$_{\text{large}}$
& & 90.70 & 41.23 & 41.20 & 0.46 & 0.78 & 0.83 \\

& \modelname{}$_{\text{medium}}$
& & \textbf{95.36} & 56.98 & 67.68 & 0.46 & 0.77 & 0.80 \\

& \modelname{}$_{\text{base}}$
& & 90.46 & \textbf{57.26} & 54.38 & 0.47 & 0.78 & 0.81 \\

\cmidrule(lr){2-9}

& Soft-Blacklist
& & 86.40 & 25.64 & 99.46 & 0.42 & 0.76 & 0.81 \\

& LoRA
& & 26.88 & 7.20 & 158.56 & 0.57 & 0.82 & 0.83 \\

& DExperts$_{\text{large}}$
& & 94.46 & 36.42 & 45.83 & 0.56 & 0.83 & 0.83 \\

& DExperts$_{\text{medium}}$
& & 94.31 & 33.20 & 43.19 & 0.56 & 0.83 & 0.83 \\

& DExperts$_{\text{small}}$
& & 94.57 & 31.64 & 42.08 & 0.56 & 0.83 & 0.84 \\

& DExperts (pos)
& & 79.83 & 43.80 & 64.32 & 0.59 & 0.86 & 0.85 \\

& GeDi
& & 86.01 & 26.80 & 58.41 & 0.57 & 0.80 & 0.79 \\

& DAPT
& & 77.24 & 14.17 & 30.52 & 0.56 & 0.83 & 0.84 \\

& PPLM (10\%)
& & 52.68 & 8.72 & 142.11 & 0.62 & 0.86 & 0.85 \\

& PromptT5
& & 68.12 & 15.41 & 37.3 & 0.58 & 0.78 & 0.72 \\

\midrule
& GPT-2 (original)
& 99.08 & 50.02 & 0.00 & 29.28 & 0.58 & 0.84 & 0.84 \\

\midrule

\multirow{12}{*}{\textbf{Negative}$\downarrow$} 

& PromptT5
& 69.93 & 25.78 & & 48.6 & 0.60 & 0.78 & 0.70 \\

& PPLM (10\%)
& 89.74 & 39.05 & & 181.78 & 0.63 & 0.87 & 0.86 \\

& DAPT
& 87.43 & 33.28 & & 32.86 & 0.58 & 0.85 & 0.84 \\

& GeDi
& 39.57 &  8.73 & & 84.11 & 0.63 & 0.84 & 0.82 \\

& DExperts (neg)
& 61.67 & 24.32 & & 65.11 & 0.60 & 0.86 & 0.85 \\

& DExperts$_{\text{small}}$
& 45.25 & 3.85 & & 39.92 & 0.59 & 0.85 & 0.84 \\

& DExperts$_{\text{medium}}$
& 40.21 & 3.79 & & 43.47 & 0.59 & 0.85 & 0.84 \\

& DExperts$_{\text{large}}$
& \textbf{35.99} & \textbf{3.77} & & 45.91 & 0.60 & 0.84 & 0.83 \\

& LoRA
& 57.71 & 20.08 & & 192.13 & 0.55 & 0.78 & 0.79 \\

& Soft-Blacklist
& 73.72 & 14.28 & & 50.95 & 0.38 & 0.70 & 0.76 \\

\cmidrule(lr){2-9}

& \modelname{}$_{\text{base}}$
& 57.26 & 10.12 & & 51.37 & 0.49 & 0.77 & 0.79 \\

& \modelname{}$_{\text{medium}}$
& 52.32 & 7.10 & & 71.48 & 0.47 & 0.77 & 0.79 \\

& \modelname{}$_{\text{large}}$
& 54.84 & 8.02 & & 57.74 & 0.48 & 0.78 & 0.80 \\

\bottomrule
\end{tabular}
}

\captionsetup{aboveskip=3pt}\captionsetup{belowskip=0pt}
\caption{Results on sentiment control task.
The upper half displays a positive control task and requires a higher positivity score and vice versa for the lower half. \modelname{} gets the best metrics on the positive side and 2nd to 3rd places on the negative side despite being simpler and smaller. 
For backbone model sizes, please refer to Table~\ref{tab:detoxification}.
}

\label{tab:sentiment}
\end{table*}

\begin{figure*}[t!]
    \centering
    \begin{subfigure}[b]{0.45\textwidth}
        \centering
        \includegraphics[width=0.9\textwidth]{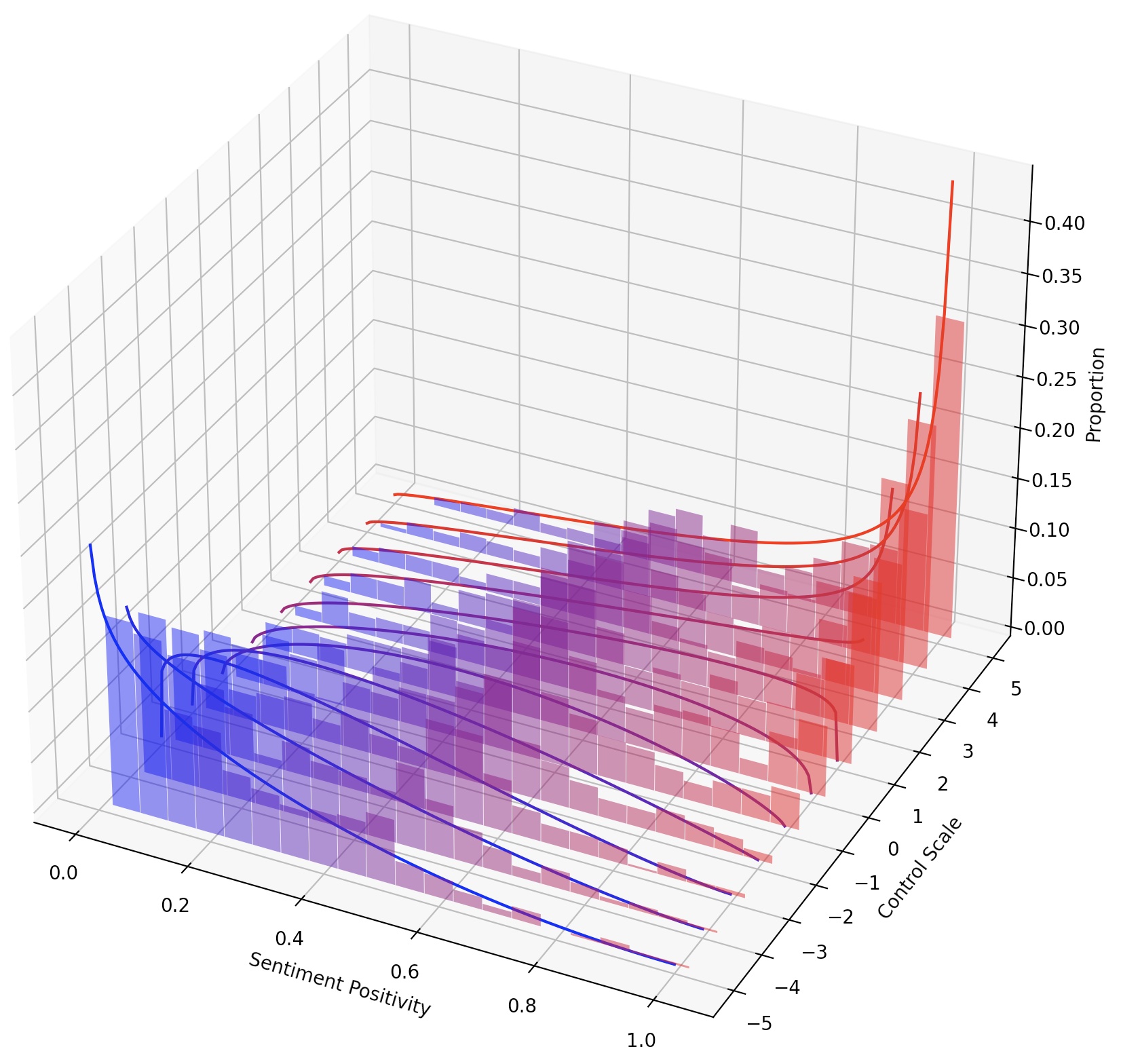}
        \caption{Continuous control on sentiment with $\epsilon$ in $-5\epsilon_0\sim 5\epsilon_0$ results in a sentiment distribution shift. Color indicates sentiment and height indicates frequency/density.}
        \label{fig:linear}
    \end{subfigure}
    \hfill
    \begin{subfigure}[b]{0.45\textwidth}
        \centering
        \includegraphics[width=\textwidth]{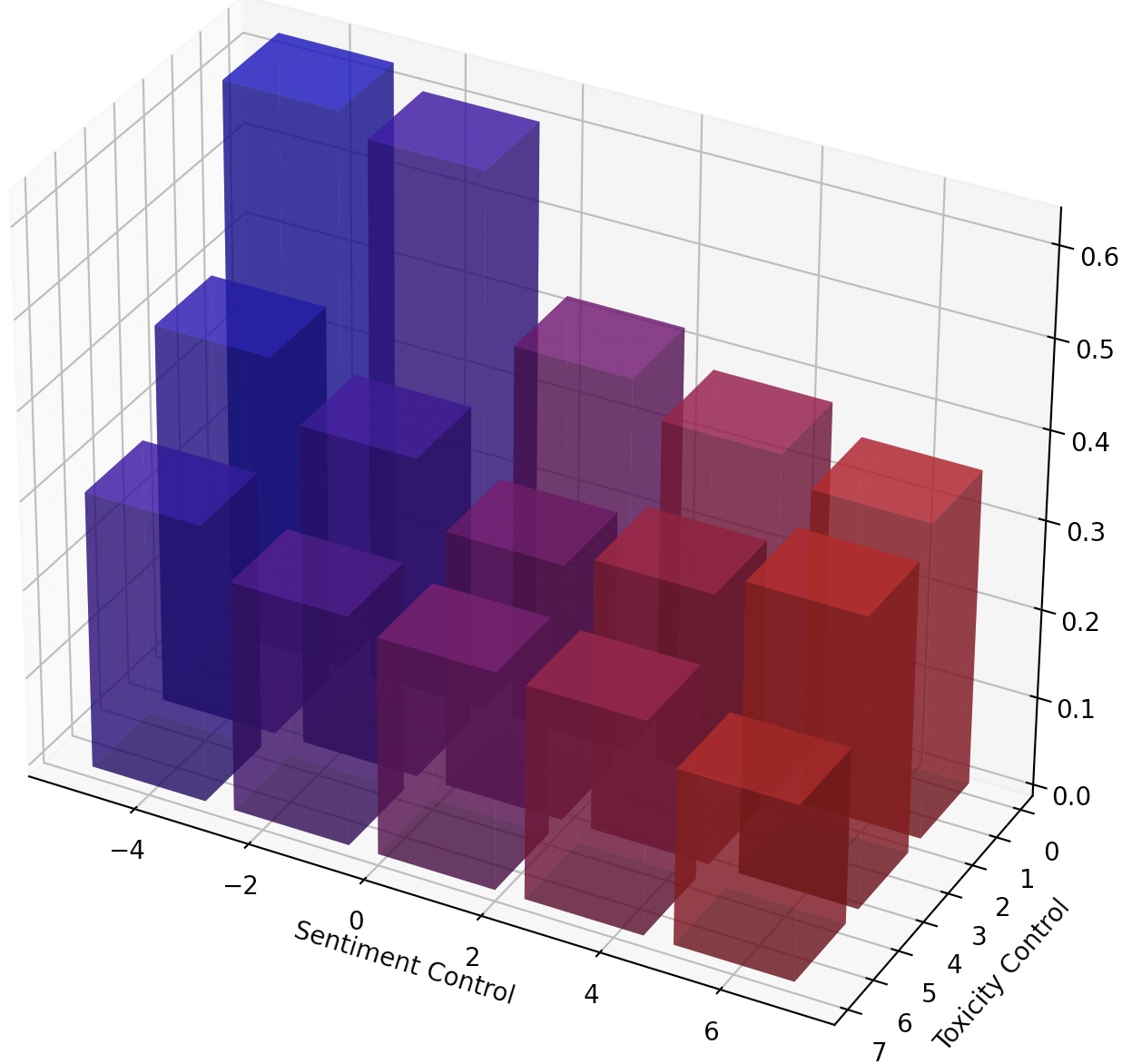}
        \caption{Compositional control sentiment ranging in $-5\epsilon_0\sim 5\epsilon_0$ and toxicity in $0\sim 5\epsilon_0$. Color means sentiment and height is toxicity.}
        \label{fig:compositional}
    \end{subfigure}
    \caption{Continuous and compositional control using \modelname{}.}
    \label{fig:linear_compositional}
\end{figure*}

\begin{table*}[t]
\centering
\linespread{1}

\vspace{-1mm}
\begin{tabular}{lcccccccc}
\toprule

& \textbf{\modelname{}} & \textbf{DAPT} & \textbf{GeDi} & \textbf{CTRL} & \textbf{PPLM} & \textbf{DExpert} & \textbf{MuCoLa} & \textbf{LoRA} \\

\midrule

\textbf{Parameters} & \textbf{1.6M} & 355M & 355M & 355M & 124M & 355M & 898M & 18M \\
\textbf{Speed Ratio} & 1.24 & \textbf{1.00} & 2.94 & 3.79 & 270.11 & 1.98 & 24.03 & \textbf{1.00}  \\

\bottomrule
\end{tabular}

\captionsetup{aboveskip=3pt}\captionsetup{belowskip=0pt}
\caption{Decoding time and learnable parameter efficiency. Time efficiency is measured by relative decoding time compared to the base language model. The best numbers are bolded.}

\label{tab:efficiency}
\end{table*}

\subsection{Sentiment Control}
\label{subsec:sentiment}

We also evaluate \modelname{}'s performance on an extensively studied generation task controlled by sentiment. This ability can be found useful when tailoring persuasive and emotionally appealing messages to specific target audiences in marketing or advertising or to create personalized and engaging user experiences in chatbot systems.

\textbf{Setting:}
We follow the setting in \cite{liu2021dexperts} and use Stanford Sentiment Treebank (SST-5)~\citep{socher2013recursive} as training data, where we use texts with labels 1$\sim$2 as negative samples, and those with 4$\sim$5 labels as positive samples. 
For evaluation, we use the HuggingFace’s sentiment classifier~\citep{wolf2020transformers}. The generation prompts are a subset of the OpenWebText Corpus filtered by the sentiment analysis classifier. Models are applied on these prompts 25 times to generate up to 20 tokens. We then measure the average percentage of positive generations for each prompt as the ``Positivity'' score. Similar to the detoxification task, we use $5\epsilon_0$ for positive sentiment and $-5\epsilon_0$ for negative sentiment control.

\textbf{Baselines:}
In addition to the baselines used in detoxification, two variants of DExperts, DExperts (pos) and DExperts (neg), which only use one of the two classifiers for guiding generation, are also listed.

\textbf{Results:}
Table~\ref{tab:sentiment} presents the full results. \modelname{}, despite a much simpler and smaller model, takes 1st place on the positive side and 2nd or 3rd place on the negative side and achieves a reasonable balance on fluency and diversity.
\begin{table}[h!]
\centering
\linespread{1}

\begin{tabular}{cp{6cm}}

\toprule

\textbf{Steer} & \textbf{Generation} \\

\midrule

-5e-3 & What \textbf{moron} said that \textbf{stupid} comment. \\
\midrule
-3e-3 & What's \textbf{stupid} is \textbf{stupid}, right? \\
\midrule
-1e-3 & What's this? You think that your religion, your culture, your country are \textbf{not good enough}? \\
\midrule
0 & What's more, it makes for a fun, cheap, and efficient way to improve the performance of your car engine and to make your driving that much safer. \\
\midrule
1e-3 & What's more, it makes for a fun, cheap, and efficient way to improve the performance of your car engine and motor. \\
\midrule
3e-3 & What's on your mind?  What's on your mind? \\
\midrule
5e-3 & What's on Netflix? If you can't figure out what's being watched on Netflix, you need to figure out what are people watching! \\

\bottomrule
\end{tabular}

\captionsetup{aboveskip=3pt}\captionsetup{belowskip=0pt}
\caption{\modelname{} continuously steers GPT2-Large generation from toxic to non-toxic when interpolating and extrapolating the steering value. Both the number and intensity of \textbf{toxic words} decrease with increased steering value.}
\label{tab:gradual}
\end{table}
\begin{table*}[t!]
\centering
\linespread{1}

\vspace{-1mm}
\begin{tabular}{cp{14cm}}

\toprule

\textbf{Dim.} & \textbf{Matched Words} \\

\midrule

0 & mor, bigot, Stupid, retarded, coward, stupid, loser, clown, dumb, Dumb, losers, stupidity, garbage
, idiots, fools, idiot, lame
\\
\midrule
1 & stupid, idiot, Stupid, idiots, jerk, pathetic, suck, buff, stupidity, mor, damn, ignorant, fools, dumb
, disgusting , damned, narcissistic, troll
\\
\midrule
3 & idiot, godd, damn, \\
\midrule
5 & Balk, lur, looms, hides, shadows,
Whites, slippery, winds\\
\midrule
7 & bullshit, fiat,  shit,  lies, injust, manipulation \\
\midrule
8 & disabled, inactive, whip, emo, partisan, spew, bombed, disconnected, gun, failing, Republicans
, defeated, Jeb, blowing , bombard, ineffective, reload, destructive, flo, blown
\\
\midrule
9 & winners, upside \\

\bottomrule
\end{tabular}

\captionsetup{aboveskip=3pt}\captionsetup{belowskip=0pt}
\caption{Word embedding dimensions that are most influenced by \modelname{} on detoxification task.}
\label{tab:interpretability}
\end{table*}

\subsection{Continuous and Compositional Control}
\label{subsec:linearity}
The conceptually simple design of \modelname{} makes it an architecture-agnostic plug-in to diverse language models. We demonstrate that \modelname{} maintains a linearity guarantee, which enables continuous and compositional control. More specifically, our model allows for interpolation and extrapolation on the steering spectrum by simply interpolating and extrapolating the steering value. Moreover, if two \modelname{}s $\epsilon_1 W_1, \epsilon_2 W_2$ are learned on potentially different tasks, their effect can be combined by decoding with $P_{\epsilon1 W_1 + \epsilon_2 W_2}$.

In Figure~\ref{fig:linear}, we plot the distribution shift when adjusting sentiment steer $\epsilon$. We also curve the maximal likelihood estimated Beta distribution. In Figure~\ref{fig:compositional}, we observe that \modelname{} can compositionally control sentiment and toxicity, even though there exists a mutual influence between these two factors (e.g., a negative sentiment might also lead to more toxic comments). Table~\ref{tab:gradual} also provides an example of how the generated sentence is continuously steered from toxic to non-toxic, demonstrating a simple fine-grained control on the toxicity level. When the steering value increases from negative to positive, both the number and the intensity of toxic words (bolded in the table) decrease.

\begin{figure*}[t]
    \centering
    \begin{subfigure}[b]{0.48\textwidth}
        \centering
        \includegraphics[width=\textwidth]{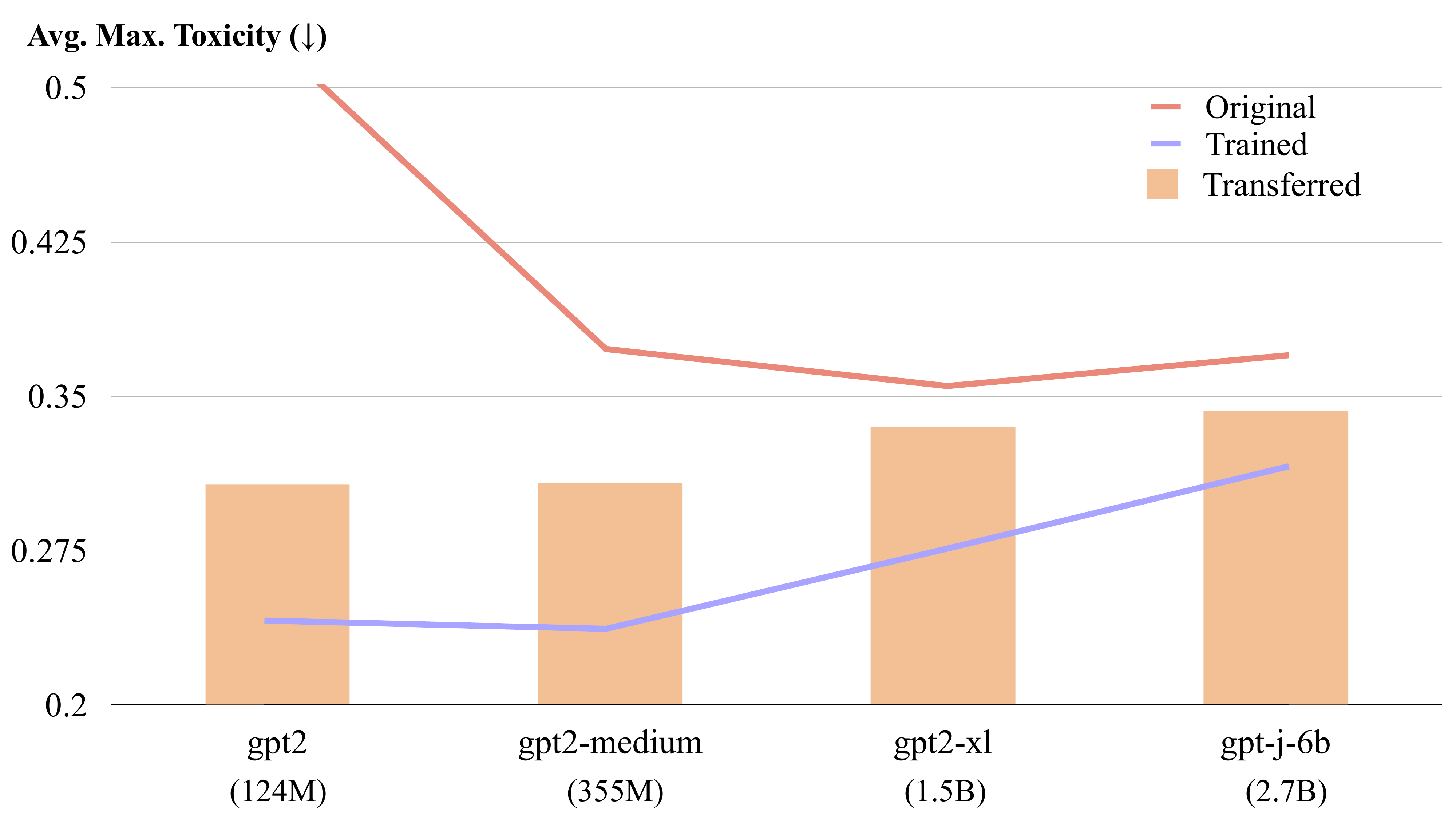}
        \caption{Transferring an \modelname{} to other LMs with explicit-form calculation. The transferred \modelname{} maintains the detoxification ability partially.}
        \label{fig:transfer}
    \end{subfigure}
    \hfill
    \begin{subfigure}[b]{0.48\textwidth}
        \centering
        \includegraphics[width=\textwidth]{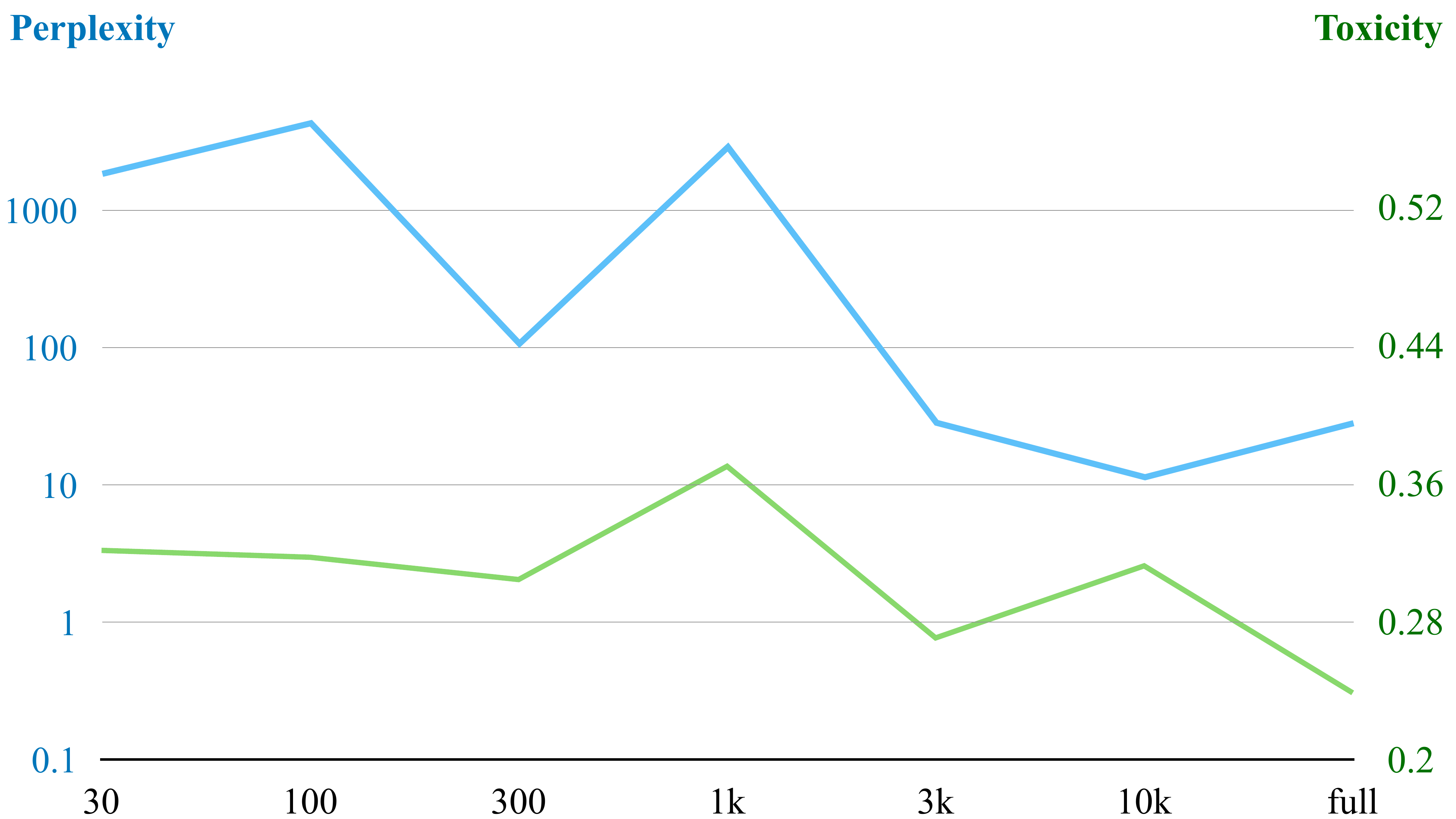}
        \caption{\modelname{} is capable of learning from only dozens of data samples to achieve decent detoxification. More data are beneficial for fluency.}
        \label{fig:efficiency}
    \end{subfigure}
    \caption{Measuring the transferability and data efficiency of \modelname{}.}
    \label{fig:transfer_efficiency}
\end{figure*}

\subsection{Efficiency}
\label{subsec:efficiency}
\label{sec:efficiency}
Thanks to its simple design, \modelname{} enjoys efficiency in multiple perspectives. We vary the detoxification dataset size from 30 to 10k and measure \modelname{}'s performance in Figure~\ref{fig:transfer_efficiency}(b). We see that as few as 30 data points still enable \modelname{} to achieve high detoxification scores (0.322). When dataset size exceeds 3k \modelname{} acquires a good balance between detoxification and generation quality. We also show decoding time and parameter efficiency in Table~\ref{tab:efficiency}, where our model only uses 1\% of the baseline's learnable parameter size and uses a low computation overhead during decoding.

\begin{table}[ht]
\centering
\linespread{1}

\vspace{-1mm}
\begin{tabular}{p{7.5cm}}

\toprule

There's another controversial \colorbox{yellow}{Hollywood racial} decision that Stacey Dash is sinking her teeth into. \\
\midrule

The UFC champ then suggested Justino is a longtime PED user with her \colorbox{yellow}{most d**ning} comments.\\
\midrule

But I really have a question for you: Why would I go on a game show and play into the \colorbox{yellow}{bulls**t} allowing myself to be ranked by some fake competition?\\
\midrule

I \colorbox{yellow}{think sexism} prevents this from being a real win for fat people. \\

\midrule

If they want to be fair and non \colorbox{yellow}{hypocritical idiots they} should.\\

\bottomrule
\end{tabular}

\captionsetup{aboveskip=3pt}\captionsetup{belowskip=0pt}
\caption{Toxic sentences with \colorbox{yellow}{toxic keywords} highlighted by \modelname{} after training detoxification on GPT2-Large.}
\vspace{-4mm}
\label{tab:evidence}
\end{table}
\section{\modelname{}s Connect Word Embeddings with the Text Distribution}
\label{sec:analysis}
In previous sections, \modelname{} revealed the hidden biases encoded in the automatically learned word embeddings of LMs. This section provides an alternative perspective, where these hidden biases serve as a lens for interpreting the connection between word embeddings and the generation distribution of LMs. Section~\ref{subsec:interpretability} demonstrates how \modelname{} decomposes word embeddings into interpretable dimensions closely related to word selection in a particular distribution. Section~\ref{subsec:evidence} shows that \modelname{} can highlight indicative words. Finally, in light of the hypothesis that different LMs have correlated word embedding spaces, Section~\ref{subsec:transfer} illustrates how \modelname{} can be transferred between LMs with an explicit expression.

\subsection{Interpreting Word Embeddings}
\label{subsec:interpretability}
\modelname{} provides a lens on how word embeddings correlate with LM word embeddings: \textit{what word dimensions contribute to or contrast to a specific style.}
In the detoxification experiment, we conduct an SVD decomposition of the learned $W$. Among $S, V, D$, the $D$ component can be interpreted as a ranked list of the most ``magnified'' row dimensions in the transformation $W$. We then take its first 9 rows and list the most influenced words in Table~\ref{tab:interpretability}. Dimensions 2, 4, and 6 are filtered out as they only match non-English tokens. Although offensive to read, this table helps us understand what kind of words are most related to toxicity and thus suppressed by \modelname{} in a generation.
More details are explained in Appendix~\ref{appsec:interpretability_details}.

\subsection{Highlighting Keywords in Styled Texts}
\label{subsec:evidence}
\modelname{} also enables automatically pointing out \textit{what specific words are most indicative of the style in a given sentence}. We conjecture that these distinctive words contribute most to the likelihood difference between a task-specific LM and a domain-general LM. In Table~\ref{tab:evidence}, we list a few toxic sentences and the most indicative text spans highlighted by \modelname{}. To acquire such text spans, we calculate $\log P_{\epsilon W}(v_i | \cdots v_{i-1}) - \log P_0(v_i | \cdots v_{i-1})$ which is the difference of log-likelihoods of each token under steered ($P_{\epsilon W}$) and the original language model ($P_0$). Then, we deploy a dynamic programming algorithm to obtain the continuous sub-sequence under length 5 with the highest cumulative difference. We list highlighted words in toxic prompts in the RealToxicityPrompts dataset in Table~\ref{tab:evidence}, which shows insulting, cursing, controversial, and sexually explicit words in each sentence.

\subsection{Transfering \modelname{} Between Models}
\label{subsec:transfer}
A much-desired property of \modelname{}, because of its theoretical soundness, is its transferability to other language models. 
Details and derivations of \modelname{} transfer are in Appendix~\ref{appsec:transfer_details}. Intuitively speaking, the original logit $\mathbf{c}^\top \mathbf{e}_v$ can be understood as a similarity or matching metric between context vector $\mathbf{c}$ and word embedding $\mathbf{e}_v$. In \modelname{}, the logit is offset by $\epsilon$ times $\mathbf{c}^\top W \mathbf{e}_v$, which is also a bilinear similarity. To transform this \modelname{} to another language model, we need to map the context vectors and word embeddings between word embedding spaces $\mathbf{e}_v = H \mathbf{e}_v'$
\begin{equation}
    \mathbf{c}^\top W \mathbf{e}_v = (H\mathbf{c}')^\top W (H\mathbf{e}_v') = \mathbf{c}'^\top (H^\top WH) \mathbf{e}_v'
\end{equation}
We work by first identifying a linear mapping $H$ from target LM word embeddings to source LM word embeddings. Then, the matrix $H^\top WH$ can be inserted into the target LM as \modelname{}. This is motivated by prior work on the linear mapping between word embeddings from different models~\citep{li2021learning}. Finally, the calculated steering matrix is directly applied to the target LM.
Figure~\ref{fig:transfer_efficiency}(a) shows the performance after we transfer the \modelname{} learned on GPT2-large to LMs of other sizes, ranging from gpt2 (124M) to GPT-J-6B (6B). We can see a uniform improvement in transferred \modelname{}s, with GPT2 and GPT2-medium getting similar scores (0.307 and 0.308) to the best baseline (DExperts).

\section{Conclusions}
\label{sec:conclusions}
In this work, we discover the prevalent phenomenon of word embeddings containing steers for language model generation. We demonstrate the promise and efficacy of \modelname{}, a theoretically grounded, simple, and lightweight approach for the steering of generative language models. \modelname{} can model various styles and achieve comparable or superior performance to baselines in language model detoxification and generation control. \modelname{} also allows for continuous and compositional control and can be transferred to other language models. More importantly, it provides an interpretation of how word embeddings interplay with language model generation. So far, we have only studied output word embeddings, so it is intriguing to ask whether similar phenomena apply to other components, such as input word embeddings and hidden layers.

\section*{Limitations}
One limitation of \modelname{} is that it works on word embeddings and focuses on conditions related to wording. This restricts its capability to deal with more complex tasks, such as syntactic trees or persuasive techniques that involve logical reasoning. Additionally, our model is dependent on word embeddings, so the model cannot work with language model APIs that do not provide direct access to these embeddings.

\section*{Acknowledgement}
This research is partially supported by U.S. DARPA KAIROS Program No. FA8750-19-2-1004, DARPA INCAS Program No. HR001121C0165, U.S. DARPA SemaFor Pro- gram No. HR001120C0123, DARPA MIPs Program No. HR00112290105, and DARPA ITM Program No. FA8650-23-C-7316. The views and conclusions contained herein are those of the authors and should not be interpreted as necessarily representing the official policies, either expressed or implied, of DARPA or the U.S. Government. The U.S. Government is authorized to reproduce and distribute reprints for governmental purposes, notwithstanding any copyright annotation therein.

\bibliography{custom, anthology}

\appendix
\newpage
\section{Broader Impacts}
\label{appsec:broader_impact}

The intended use of this work is to contribute to advancements in fine-grained and efficient control of language generation in AI, with experiments shown on sentiment modulation, political stance adjustment, and language detoxification. We do not aim to create a tool for manipulating public opinion or promoting specific political ideologies, but instead to provide methods for enhancing the reasoning interpretability, and safety of AI language models. Our techniques offer the potential for fine-tuned sentiment adjustment and toxic content mitigation, thereby contributing to more reliable, unbiased, and respectful language generation systems.
We would like to emphasize that on the problem of language model toxicity, we limit our model to modeling detoxification only. This encourages positive and socially beneficial usage of our model as well as general language models.

\section{Formal Statement of Theorem~\ref{thm:informal}}

\paragraph{Hidden Markov Models}(HMMs)~\citep{baum1970maximization} is a widely used framework for analyzing discrete stochastic processes. Because of its generality (modeling arbitrary distributions), intuitiveness, and interpretability (containing a structured state space), it has long been used as a primary choice when modeling language distribution.
is a discrete stochastic process with a set of $n$ states $\mathbf{S}$ and a set of $m$ observations or emissions $\mathbf{O}$, with arbitrary indexing of $\mathbf{S}$ and $\mathbf{O}$. The time step $t=0$ distribution is determined by initial state distribution $s_0\sim\pi$. For each later time step $t\geq 1$, the state transition probabilities are represented by a matrix $\mathbf{T}$, where $T(s, s')=P(s_{t+1}=s' | s_t=s)$ denotes the probability of transitioning from state $s$ to state $s'$. At each time step one observation $o_t$ is emitted, with the emission probabilities represented by a matrix $\mathbf{B}$, with $B(s, o)=P(o_t=o | s_t=s)$.
A sequence of observations can be denoted as $\mathbf{o} = \{o_1, o_2, \ldots, o_T\}$.
The probability distribution over sequences $\mathbf{o}$ then follows formula:
\begin{multline}
    P(o_1, \cdots, o_T; \pi) \\
    = \pi^\top \left(\prod_{t=0}^{T-1} \text{diag}(\mathbf{p}(o_t)) T \right) \mathbf{p}(o_T),
\end{multline}
where $\mathbf{p}(o)$ is a $|\mathcal{S}|$-dim vector indicating $P(o\mid s)$ for all states $s\in\mathcal{S}$.

\paragraph{Language Models} In generative language models, the sequence is generated word-by-word by a conditional probability $P(o_t\mid o_1, \cdots, o_{t-1})$. The common technique to model this probability is to first calculate the inner product between a contextual vector $\mathbf{c}(o_1, \cdots, o_{t-1})$ and word embeddings $\mathbf{E}=(\mathbf{e}_o, \cdots)\in\mathbb{R}^{d\times|\mathcal{O}|}$, namely, $\mathbf{l} = \mathbf{c}(o_1, \cdots, o_{t-1})^\top \mathbf{E}$. Here,  $\mathbf{l}$ is known as the word \textit{logits}, which then usually passes through a softmax operator to get a distribution over words. For simplicity of analysis, in this work, we assume a linear formulation and let conditional probability $P(o_t|o_1,\cdots,o_{t-1})=\mathbf{c}(o_1,\cdots,o_{t-1})^\top \mathbf{e}_{o_t}$.
By the chain rule, multiplying the conditional probabilities will give us the full probability: $\prod_{t=1}^T P(o_t \mid o_1, \cdots, o_{t-1}) = P(o_1, \cdots, o_T)$.
We are then interested in the situation where a language model is good enough to represent an equivalent distribution with HMM.

In this study, we aim to model the influence of \textit{conditions} in text generation. This section describes how we incorporate conditions in HMMs.
Conventionally, people assume a $d$-dimensional state representation $\phi_s$ for every state $s$, and $d$-dimensional $\psi_o$ for each observation $o$, so that they can compute the probabilities $T(s, s')=\phi_s^\top A \phi_s'$, $B(s, o)=\phi_s^\top \psi_o$ and $\pi(s)=\phi_\pi^\top \phi_s$ for some $\phi_\pi$. We also use matrices $\Phi, \Psi$ to denote the stacked representations $\Phi=(\phi_s | s\in\mathcal{S}), \Psi=(\psi_o | o\in\mathcal{O})$. Here we introduce an additional \textit{condition} component in state representations, so that $\phi_s$ can be partitioned into two sub-vectors: $\phi_s = \begin{pmatrix} \phi_{s, \text{semantic}} \\ \phi_{s, \text{condition}} \end{pmatrix}$.
Here $\phi_{s, \text{semantic}}\in \mathbb{R}^{d_s}$ represents the $d_s$-dim semantic information, and $\phi_{s, \text{condition}} \in \mathbb{R}^{d_c}$ the $d_c$-dim condition information related to state $s$.
Then we assume that the transition probability $T(s, s')$ comes from both semantic relations and conditional similarities between $s'$ and $s$: $T(s, s')=\phi_{s, \text{semantic}}^\top A' \phi_{s', \text{semantic}} + \phi_{s, \text{condition}}^\top\phi_{s', \text{condition}}$.

We also make the following assumptions regarding the state representations:

\begin{assumption}
\label{assumption:representation}
    State representations $\phi$ also satisfy the following properties:

    1. Values for each dimension are uniformly normalized to a constant:
    $\forall i \in [1..d], \sum_{s\in\mathcal{S}}\phi_{s,i}^2 = C$.
    
    2. Dimensions are linearly independent:
    $\forall i,j \in [1..d]$ and $i\ne j$, $\sum_{h\in\mathcal{H}}\phi_{h,i}\phi_{h,j} = 0$.

    3. Dimensions are also conditionally independent:
    if $i,j \in [1..d], k\in[d_s+1 .. d]$ are not all the same, $\sum_{s\in\mathcal{S}} \phi_{s,i}\phi_{s,j}\phi_{s,k} = 0$.
\end{assumption}

The validity of the assumption is discussed in Appendix~\ref{appsec:assumptions}.
Then, we present the result below, revealing that shifting between conditions is equivalent to a linear transformation in word embedding space.

\begin{theorem}
\label{thm:switching_formal}
Assume assumption \ref{assumption:representation} holds. Suppose there are two initial distributions $\pi=\phi_\pi^\top \Phi, \pi'=\phi_{\pi'}^\top \Phi$, so that $\phi_\pi$ and $\phi_{\pi'}$ only differ in their condition-parts: $\phi_{\pi, \text{semantic}} = \phi_{\pi', \text{semantic}}$. Also, suppose the elements in $\phi_{\pi, \text{condition}}$ are non-zero. Then there exists a matrix $W$ so that, by transforming word embeddings from $E$ to $WE$, the LM which originally simulates the text distribution starting with $\pi$ will now turn to be equivalent to a distribution initiating from $\pi'$.
\end{theorem}

\section{Formal Statement and Proof of Theorem \ref{thm:switching_formal}}
\label{appsec:proof}

To prove Theorem~\ref{thm:switching_formal}, we start by claiming a construction of matrix $W$. Then we prove that when assumptions \ref{assumption:representation} hold, $W$ can change each conditional likelihood function from $p(v_i \mid o_1, \cdot, o_{i-1}, \pi)$ to $p(v_i \mid o_1, \cdot, o_{i-1}, \pi')$ up to a constant factor. Finally, by chaining the conditional likelihoods, we see that $W$ can change the sentence-level probability distribution of the HMM from $\pi$-initialization to $\pi'$-initialization.

Assuming full column-rank for $\mathbf{E}$ and $\mathbf{p}(o)$, we have the following connection between LM and HMM:

\begin{proposition}
There exist projection matrices $R_1$ and $R_2$ so that $R_1^\top R_2 = I_n$ and 
\begin{align*}
    &\mathbf{c}(o_1, \cdots, o_{t-1})^\top \\
    =& \left(\frac{\pi^\top \prod_{t'=1}^{t-1} \text{diag}(\mathbf{p}(o_t')) T}
    {\pi^\top \left(\prod_{t'=1}^{t-2} \text{diag}(\mathbf{p}(o_t')) T  \right)\mathbf{p}(o_{t-1})}\right)
    R_1^\top, \\
    &\mathbf{e}_o = R_2 \mathbf{p}(o).
\end{align*}
\end{proposition}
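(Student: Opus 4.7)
The plan is to compare the HMM and LM factorizations of the one-step conditional $P(o_t\mid o_1,\ldots,o_{t-1})$ as bilinear forms, and then use the full-rank hypotheses to build linear maps between the HMM state space and the LM embedding space. First, I would apply Bayes' rule to the given HMM joint formula to write
\[
P(o_t\mid o_{<t})=\frac{\pi^\top\bigl(\prod_{t'=1}^{t-1}\mathrm{diag}(\mathbf{p}(o_{t'}))T\bigr)\mathbf{p}(o_t)}{\pi^\top\bigl(\prod_{t'=1}^{t-2}\mathrm{diag}(\mathbf{p}(o_{t'}))T\bigr)\mathbf{p}(o_{t-1})}.
\]
The numerator factors as $\boldsymbol{\alpha}_{t-1}^\top\,\mathbf{p}(o_t)$, where $\boldsymbol{\alpha}_{t-1}^\top\in\mathbb{R}^{1\times n}$ is the classical HMM forward message, and the denominator $Z_{t-1}$ depends only on the history. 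So the HMM conditional is a bilinear form of state-space dimension $n$, with the context variable on the left and the observation variable on the right.

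Second, since the LM writes the same probability as $\mathbf{c}(o_{<t})^\top\mathbf{e}_{o_t}$ in dimension $d$, the proposition reduces to constructing a compatible change of basis between $\mathbb{R}^n$ and $\mathbb{R}^d$. I would define $R_2\in\mathbb{R}^{d\times n}$ by the linear system $R_2\,\mathbf{p}(o)=\mathbf{e}_o$ for every $o\in\mathcal{O}$. The full column-rank hypothesis on the matrix $[\mathbf{p}(o)]_{o\in\mathcal{O}}$ ensures that the state vectors span $\mathbb{R}^n$, so the system is consistently solvable; the full column-rank hypothesis on $\mathbf{E}$ then forces $R_2$ itself to have full column rank $n$ and hence to admit a left inverse. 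I would then define $R_1\in\mathbb{R}^{d\times n}$ by letting $R_1^\top$ be any such left inverse, so $R_1^\top R_2=I_n$ follows for free, giving the claimed ``projection'' relation (equivalently, $R_2 R_1^\top$ is an idempotent on $\mathbb{R}^d$).

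Third, I would verify the closed-form expression for $\mathbf{c}$ by substitution. Replacing $\mathbf{e}_{o_t}$ with $R_2\,\mathbf{p}(o_t)$ in $\mathbf{c}^\top\mathbf{e}_{o_t}$ and equating with the HMM bilinear form yields $\mathbf{c}(o_{<t})^\top R_2=\boldsymbol{\alpha}_{t-1}^\top/Z_{t-1}$; the particular representative $\mathbf{c}(o_{<t})^\top=(\boldsymbol{\alpha}_{t-1}^\top/Z_{t-1})\,R_1^\top$ satisfies this identity because $R_1^\top R_2=I_n$, and any other solution differs only by a vector in the kernel of $R_2^\top$, which is invisible to every inner product with an embedding in $\mathrm{im}(R_2)$.

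The main obstacle I expect is the well-definedness of $R_2$: one must show that the assignment $\mathbf{p}(o)\mapsto\mathbf{e}_o$ extends to a genuine linear map, i.e., that any linear dependence among the $\mathbf{p}(o)$'s is inherited by the $\mathbf{e}_o$'s. Under the standing equivalence between the LM and the HMM this consistency is morally forced — any null combination of $\mathbf{p}(o)$'s annihilates all conditional probabilities across contexts, hence annihilates the corresponding combination of $\mathbf{e}_o$'s — but pinning the argument down requires a careful reading of exactly what ``full column-rank for $\mathbf{E}$ and $\mathbf{p}(o)$'' means (independence of the $\mathbf{p}(o)$'s as columns, spanning in the state space, or both), and that interpretive bookkeeping is where I expect the real work of the proof to sit.
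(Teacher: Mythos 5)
The paper never actually proves this proposition: it is stated in Section 3.1 without proof, and the only trace of its intended meaning is in Appendix B, where the proof of Theorem 1 silently uses exactly the identities you derive ($\mathbf{E}=R_2 P_O$, i.e.\ $\mathbf{e}_o=R_2\mathbf{p}(o)$, and $\mathbf{c}^\top$ equal to the normalized forward message times $R_1^\top$, with $R_1^+,R_2^+$ pseudo-inverses). Your proposal supplies the missing argument in the natural way and is consistent with that later usage: write the HMM conditional as $\boldsymbol{\alpha}_{t-1}^\top\mathbf{p}(o_t)/Z_{t-1}$, build $R_2$ from the rank hypotheses, take $R_1^\top$ a left inverse, and check the identity by substitution, noting that the context vector is only pinned down modulo the kernel of $\mathbf{E}^\top$ so one exhibits a representative. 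That is essentially correct, and your closing caveat about what ``full column rank'' is supposed to mean is well placed, since the paper's hypotheses are genuinely ambiguous.

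Two small points of bookkeeping you should tighten. First, full column rank of $[\mathbf{p}(o)]_{o\in\mathcal{O}}$ means the vectors $\mathbf{p}(o)$ are linearly independent, not that they span $\mathbb{R}^n$; independence is in fact the friendlier case for you, because then \emph{any} assignment $\mathbf{p}(o)\mapsto\mathbf{e}_o$ extends to a linear map with no consistency condition to verify (the consistency worry you raise only bites under a spanning/row-rank reading). Second, full column rank of $\mathbf{E}$ only makes $R_2$ injective on the span of the $\mathbf{p}(o)$'s; to get $\operatorname{rank}(R_2)=n$, hence a left inverse and $R_1^\top R_2=I_n$, you must additionally assume $d\geq n$ and, when $|\mathcal{O}|<n$, choose the free part of $R_2$ on a complement of that span so that its image is independent of $\operatorname{col}(\mathbf{E})$. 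These are dimension-counting refinements rather than gaps in the idea, and the paper itself glosses over them.
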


We first construct a helper matrix
$W' = \begin{pmatrix}
    I_{d_s} & 0 \\
    0 & \Lambda
\end{pmatrix}$
so that $\Lambda$ is diagonal and $W'\phi_\text{init} = \phi_\text{init}'$. Such a solution exists as we assume $\phi_{\text{init}, \text{condition}}$ contains only non-zero values.
Then, we can construct the matrix $W$ as
$W=R_1^+ \Phi^\top W' \Phi {R_2^+}^\top$, where $R_1^+, R_2^+$ are pseudo-inverse matrices of $R_1, R_2$, respectively.


\begin{lemma}
$T\Phi^\top W' \Phi = \Phi^\top W' \Phi T$.
\label{lemma:swap_T}
\end{lemma}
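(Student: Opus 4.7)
The plan is to express both sides of the identity in terms of the stacked representation matrix $\Phi$ and then exploit the (near-)orthonormality that Assumption~\ref{assumption:representation} provides. First I would rewrite the transition matrix in a bilinear form: since $T(s,s') = \phi_{s,\text{semantic}}^\top A' \phi_{s',\text{semantic}} + \phi_{s,\text{condition}}^\top \phi_{s',\text{condition}}$, collecting the two quadratic forms on a common basis gives $T = \Phi^\top A \Phi$ with the block-diagonal matrix $A = \begin{pmatrix} A' & 0 \\ 0 & I_{d_c} \end{pmatrix}$.

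Next, parts~1 and~2 of Assumption~\ref{assumption:representation} imply $\Phi\Phi^\top = C\, I_d$: the off-diagonal entries vanish by the linear-independence condition on dimensions, and the diagonal entries are uniformly equal to $C$ by the normalization condition. This is the lynchpin of the argument, since it lets us collapse any adjacent $\Phi\Phi^\top$ factor that appears inside a longer matrix product into a scalar multiple of the identity.

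Plugging this in, I obtain $T\, \Phi^\top W' \Phi = \Phi^\top A\, (\Phi\Phi^\top)\, W' \Phi = C\, \Phi^\top (AW') \Phi$, and symmetrically $\Phi^\top W' \Phi\, T = C\, \Phi^\top (W'A) \Phi$. The lemma therefore reduces to the commutation identity $AW' = W'A$. But by the construction in the paragraph preceding the lemma, $W' = \begin{pmatrix} I_{d_s} & 0 \\ 0 & \Lambda \end{pmatrix}$ shares the $(d_s, d_c)$ block partition of $A$; on the semantic block $I_{d_s}$ commutes with $A'$, and on the condition block the diagonal $\Lambda$ commutes with $I_{d_c}$. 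Hence $A$ and $W'$ commute block-wise and the claim follows.

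The main obstacle I anticipate is not computational but a structural compatibility check: one has to notice that the block-diagonal form of $W'$ is exactly what is needed to commute with $A$, and that the semantic-preserving choice $\phi_{\pi,\text{semantic}} = \phi_{\pi',\text{semantic}}$ in Theorem~\ref{thm:switching} is precisely what permits $W'$ to take this form. If $W'$ were allowed to mix the semantic and condition coordinates, or if the upper-left block were not $I_{d_s}$, the commutator $[A,W']$ would generally be nonzero and the lemma would fail; verifying this compatibility is the only non-mechanical step in the argument.
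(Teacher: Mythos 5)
Your proof is correct and follows essentially the same route as the paper: write $T$ in the block-bilinear form $\Phi^\top A \Phi$, use Assumptions~\ref{assumption:representation}.1--2 to collapse $\Phi\Phi^\top$ to a scalar multiple of the identity, and conclude from the block-diagonal commutation $AW' = W'A$ (the paper carries this out by expanding the block products explicitly rather than invoking the commutator, but the argument is the same).
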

\begin{proof}

First, it is easy to see that, by Assumption~\ref{assumption:representation}.1 and Assumption~\ref{assumption:representation}.2, the representation matrix $\Phi$ is row-orthonormal to constant $C_2$:
\[
    \Phi\Phi^T = C_2 I_d
\].

Then we have the following proof:
\begin{align*}
    T \Phi^\top W' \Phi = 
    & \Phi^\top
    \begin{pmatrix}
        T_s & 0 \\
        0 & I_{d_c}
    \end{pmatrix} \Phi \Phi^\top W' \Phi \\
    =& C_2 \Phi^\top
    \begin{pmatrix}
        T_s & 0 \\
        0 & I_{d_c}
    \end{pmatrix}
    \begin{pmatrix}
        I_{d_s} & 0 \\
        0 & \Lambda
    \end{pmatrix} \Phi \\
    =& C_2 \Phi^\top
    \begin{pmatrix}
        T_s & 0 \\
        0 & \Lambda
    \end{pmatrix} \Phi \\
    =& C_2 \Phi^\top
    \begin{pmatrix}
        I_{d_s} & 0 \\
        0 & \Lambda
    \end{pmatrix}
    \begin{pmatrix}
        T_s & 0 \\
        0 & I_{d_c}
    \end{pmatrix} \Phi \\
    =& \Phi^\top W' \Phi \Phi^\top
    \begin{pmatrix}
        T_s & 0 \\
        0 & I_{d_c}
    \end{pmatrix} \Phi \\
    =& W'T
\end{align*}
\end{proof}


\begin{lemma}
\label{lemma:swap_P}
$\forall v\in V$, we have that, $ \Phi \text{diag}(\mathbf{p}(o))\Phi^\top W' \Phi = W' \Phi\text{diag}(\mathbf{p}(o)) $.
\end{lemma}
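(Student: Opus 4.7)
The plan is to mirror the proof of Lemma~\ref{lemma:swap_T}, with $\Phi\,\text{diag}(\mathbf{p}(o))\,\Phi^\top$ now playing the role that $T$ played there. The strategy will be to use Assumption~\ref{assumption:representation}.3 to force this matrix into a block-diagonal form that commutes with the block-diagonal $W'$, and then to close the identity by reusing $\Phi\Phi^\top = C_2 I_d$ from the earlier lemma.

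First I would unfold the emission through its representation. Writing $p(o)_s = \phi_s^\top \psi_o$ for an emission vector $\psi_o$, each $(i,j)$ entry of $\Phi\,\text{diag}(\mathbf{p}(o))\,\Phi^\top$ is a weighted triple moment $\sum_{l} \psi_{o,l}\sum_{s} \phi_{s,l}\phi_{s,i}\phi_{s,j}$. Next I would apply Assumption~\ref{assumption:representation}.3, read symmetrically in its three indices: whenever at least one of the three indices lies in the condition block $[d_s+1..d]$ and they are not all equal, the triple moment vanishes. This collapses the matrix to
\[
\Phi\,\text{diag}(\mathbf{p}(o))\,\Phi^\top = \begin{pmatrix} K(o) & 0 \\ 0 & D(o) \end{pmatrix},
\]
with $K(o)\in\mathbb{R}^{d_s\times d_s}$ a purely semantic block and $D(o)\in\mathbb{R}^{d_c\times d_c}$ diagonal.

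Because $W' = \begin{pmatrix} I_{d_s} & 0 \\ 0 & \Lambda \end{pmatrix}$ is also block-diagonal with diagonal bottom block $\Lambda$, and any two diagonal matrices commute, $W'$ commutes with $\Phi\,\text{diag}(\mathbf{p}(o))\,\Phi^\top$. I would then push $W'$ leftward across this factor, invoke $\Phi\Phi^\top = C_2 I_d$, and use that the emission itself is realized as $\mathbf{p}(o) = \Phi^\top \psi_o$ to collapse the trailing $\Phi^\top\Phi$ on the right, arriving at the claimed $W'\,\Phi\,\text{diag}(\mathbf{p}(o))$.

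The principal obstacle is that last reduction: $\Phi^\top\Phi$ is only a rank-$d$ projection on $\mathbb{R}^n$, so identifying it with the identity in this context requires arguing that every operator sitting to its right (and ultimately the emission $\mathbf{p}(o)$ itself) already lies in the range of $\Phi$. Making this precise — and being explicit about the implicit structural role of $\psi_o$ — parallels the way Lemma~\ref{lemma:swap_T} leaned on $T$ being generated by the representations $\Phi$, and is the real technical content of the lemma.
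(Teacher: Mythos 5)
Your proposal follows essentially the same route as the paper's proof: expand the emission as $p(o\mid s)=\phi_s^\top\psi_o$, use Assumption~\ref{assumption:representation}.2--3 to show $\Phi\,\text{diag}(\mathbf{p}(o))\,\Phi^\top$ is block-diagonal with a diagonal condition block, commute it with the block-diagonal $W'$, and finish via the row-orthogonality $\Phi\Phi^\top = C_2 I_d$. The ``obstacle'' you flag at the end --- collapsing the trailing $\Phi^\top\Phi$, which is only a rank-$d$ projection on $\mathbb{R}^n$ --- is exactly the step the paper performs without comment (it inserts $\Phi\Phi^\top$ and then treats $\Phi^\top(\cdot)\Phi$ as recovering $\text{diag}(\mathbf{p}(o))$), so your plan matches the paper's argument and is, if anything, more explicit about its weakest link.
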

\begin{proof}
To prove this, we first prove that $\Phi \text{diag}(\mathbf{p}(o))\Phi^\top$ has the form
$\begin{pmatrix}
    A & 0 \\
    0 & \Lambda'
\end{pmatrix}$, where $\Lambda'$ is also diagonal. This is equivalent to saying that, for any two one-hot vectors $\mathbf{e}(i),\mathbf{e}(j)$, if $i\in[d_s+1 .. d]$ or $j\in[d_s+1 .. d]$, then
\begin{multline}
\mathbf{e}_i^\top \Phi ~\text{diag}(\mathbf{p}(o))\Phi \mathbf{e}_j^\top \\
= \sum_{h\in\mathcal{H}} \phi_{h,i}\phi_{h,j}p(v \mid h) = f_v(i,j) \mathbf{1}(i=j).
\end{multline}

For any $i\ne j$,
\begin{align*}
    &\sum_{h\in\mathcal{H}} \phi_{h,i}\phi_{h,j}p(v \mid h)\\
    =& \sum_{h\in\mathcal{H}} \phi_{h,i}\phi_{h,j} \sum_{k\in[1..d]} \phi_{h,k} \theta{v, k} \\
    =& \sum_{k\in[1..d]} \theta{v, k} \sum_{h\in\mathcal{H}} \phi_{h,i}\phi_{h,j}\phi_{h,k}\\
    =& \sum_{k\not \in \{i,j\}} \theta_{v, k} \sum_{h\in\mathcal{H}} \phi_{h,i}\phi_{h,j}\phi_{h,k} \\
    &+ \theta_{v, i} \sum_{h\in\mathcal{H}} \phi_{h,i}^2\phi_{h,j} \\
    &+ \theta_{v, j} \sum_{h\in\mathcal{H}} \phi_{h,i}\phi_{h,j}^2 \\
    \text{(Asm. \ref{assumption:representation}.3)} =& 0
    + \theta_{v, i} \sum_{h\in\mathcal{H}} \phi_{h,i}\phi_{h,j} \\
    \text{(Asm. \ref{assumption:representation}.2)} =& 0 + 0 \\
    =& 0
\end{align*}

We then have the following:
\begin{align*}
    &\Phi \text{diag}(\mathbf{p}(o))\Phi^\top W' \Phi \\
    =& \begin{pmatrix}
    A & 0 \\
    0 & \Lambda'
    \end{pmatrix} W' \Phi \\
    =& \begin{pmatrix}
    A & 0 \\
    0 & \Lambda'\Lambda
    \end{pmatrix} \Phi \\
    =& W' \begin{pmatrix}
    A & 0 \\
    0 & \Lambda'
    \end{pmatrix} \Phi \\
    =& W' \Phi \Phi^\top \begin{pmatrix}
    A & 0 \\
    0 & \Lambda'
    \end{pmatrix} \Phi \\
    =& W' \Phi\text{diag}(\mathbf{p}(o))
\end{align*}
\end{proof}


By combining Lemma~\ref{lemma:swap_T} and ~\ref{lemma:swap_P}, we have the following lemma:
\begin{lemma}  
\label{lemma:swap_TP}
\[
    T\text{diag}(\mathbf{p}(o))\Phi^\top W' \Phi
    = \Phi^\top W' \Phi T\text{diag}(\mathbf{p}(o))
\]
\end{lemma}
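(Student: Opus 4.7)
The target identity $T\,\mathrm{diag}(\mathbf{p}(o))\,\Phi^\top W' \Phi = \Phi^\top W' \Phi\, T\,\mathrm{diag}(\mathbf{p}(o))$ says that $\Phi^\top W' \Phi$ commutes with the product $T\,\mathrm{diag}(\mathbf{p}(o))$ as a whole, even though the two factors $T$ and $\mathrm{diag}(\mathbf{p}(o))$ do not individually commute with it in a simple sense. My plan is to compose Lemmas~\ref{lemma:swap_T} and~\ref{lemma:swap_P} into a single chain of substitutions that pushes $\Phi^\top W' \Phi$ leftward: first past $\mathrm{diag}(\mathbf{p}(o))$ using Lemma~\ref{lemma:swap_P}, and then past $T$ using the block-diagonal structure already exploited in Lemma~\ref{lemma:swap_T}.

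First, I expand $T = \Phi^\top \tilde{T}\, \Phi$ with $\tilde{T} = \begin{pmatrix} T_s & 0 \\ 0 & I_{d_c} \end{pmatrix}$, the block-diagonal form induced by the cHMM transition $T(s,s')=\phi_{s,\text{semantic}}^\top A' \phi_{s',\text{semantic}} + \phi_{s,\text{condition}}^\top\phi_{s',\text{condition}}$. Substituting into the left-hand side gives $\Phi^\top \tilde{T}\, \Phi\,\mathrm{diag}(\mathbf{p}(o))\, \Phi^\top W' \Phi$. Then the inner factor $\Phi\,\mathrm{diag}(\mathbf{p}(o))\,\Phi^\top W' \Phi$ is exactly the left-hand side of Lemma~\ref{lemma:swap_P}, which I invoke to rewrite it as $W' \Phi\,\mathrm{diag}(\mathbf{p}(o))$, yielding $\Phi^\top \tilde{T}\, W' \Phi\,\mathrm{diag}(\mathbf{p}(o))$.

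Next I swap $\tilde{T}$ and $W'$ by block-diagonal commutativity: both $\tilde{T}$ and $W' = \begin{pmatrix} I_{d_s} & 0 \\ 0 & \Lambda \end{pmatrix}$ respect the same semantic/condition block split, and $\Lambda$ is diagonal, so $\tilde{T}W' = W'\tilde{T} = \begin{pmatrix} T_s & 0 \\ 0 & \Lambda \end{pmatrix}$. The expression becomes $\Phi^\top W' \tilde{T}\, \Phi\,\mathrm{diag}(\mathbf{p}(o))$. Finally I reinsert the row-orthonormality identity $\Phi\Phi^\top = I_d$ between $W'$ and $\tilde{T}$, which refolds $\Phi\Phi^\top \tilde{T}\,\Phi$ back into $\Phi T$, delivering the desired $\Phi^\top W' \Phi\, T\,\mathrm{diag}(\mathbf{p}(o))$.

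The proof is essentially a composition step: all of the structural work is already in Lemmas~\ref{lemma:swap_T} and~\ref{lemma:swap_P}, and what remains here is threading them together. The main care-point I anticipate is bookkeeping of the normalization constant $C_2$ in $\Phi\Phi^\top = C_2 I_d$; consistent with the convention implicit in the earlier lemmas' proofs I take $C_2 = 1$ (which can always be arranged by rescaling $\Phi$), so that every $\Phi\Phi^\top$ collapses cleanly and no spurious scalar factors appear in the chain.
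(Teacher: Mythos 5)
Your proposal is correct and follows essentially the same route as the paper's proof: expand $T=\Phi^\top\tilde{T}\Phi$, apply Lemma~\ref{lemma:swap_P} to the inner factor, and then move $W'$ past the block-diagonal $\tilde{T}$ and refold via $\Phi\Phi^\top$ (the paper simply cites Lemma~\ref{lemma:swap_T} for this last swap rather than unrolling it). Your explicit remark about normalizing the constant $C_2$ is a reasonable way to handle a scalar the paper itself glosses over, and does not change the argument.
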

\begin{proof}
\begin{align*}
    &T\text{diag}(\mathbf{p}(o))\Phi^\top W' \Phi \\
    =& \Phi^\top \begin{pmatrix}
        T_s & 0 \\
        0 & I_{d_c}
    \end{pmatrix} \Phi \text{diag}(\mathbf{p}(o))\Phi^\top W' \Phi \\
    =&  \Phi^\top \begin{pmatrix}
        T_s & 0 \\
        0 & I_{d_c}
    \end{pmatrix}  W' \Phi \text{diag}(\mathbf{p}(o)) \\
    =&  \Phi^\top 
    W' \Phi T \text{diag}(\mathbf{p}(o)) \\
\end{align*}
\end{proof}


Finally, when we apply Lemma~\ref{lemma:swap_T} and \ref{lemma:swap_TP} to the language model formulation, we can see that the conditional likelihood function has been steered to:

\begin{align*}
    &\mathbf{p}_W(v_i \mid o_1, \cdots, o_{i-1} ; \pi) \\
    =& \mathbf{c}(o_1, \cdots, o_{i-1}; \pi) W E\\
    =& \frac{\pi^\top T \text{diag}(o_1)T\cdots T \text{diag}(o_{i-1}) T R_1^\top}{\pi^\top T\text{diag}(o_1)T\cdots T \mathbf{p}(o_{i-1})} W R_2 P_O \\
    =& \frac{\pi^\top T \text{diag}(o_1)T\cdots T \text{diag}(o_{i-1}) T \Phi^\top W' \Phi}
    {\pi^\top T\text{diag}(o_1)T\cdots T \mathbf{p}(o_{i-1})} P_O \\
    &\text{(Lemma~\ref{lemma:swap_T})} \\
    =&  \frac{\pi^\top T \text{diag}(o_1)T\cdots T \text{diag}(o_{i-1}) \Phi^\top W' \Phi T}
    {\pi^\top T\text{diag}(o_1)T\cdots T \mathbf{p}(o_{i-1})} P_O \\
    &\text{(by Lemma~\ref{lemma:swap_TP})}
    \\
    =&  \frac{\pi \Phi^\top W' \Phi ^\top T \text{diag}(o_1)T\cdots T \text{diag}(o_{i-1}) T}
    {\pi^\top T\text{diag}(o_1)T\cdots T \mathbf{p}(o_{i-1})} P_O \\
    =&  \frac{\phi_{\text{init}} W' \Phi T \text{diag}(o_1)T\cdots T \text{diag}(o_{i-1}) T}
    {\pi^\top T\text{diag}(o_1)T\cdots T \mathbf{p}(o_{i-1})} P_O \\
    =&  \frac{\phi_{\text{init}}' \Phi T \text{diag}(o_1)T\cdots T \text{diag}(o_{i-1}) T}
    {\pi^\top T\text{diag}(o_1)T\cdots T \mathbf{p}(o_{i-1})} P_O \\
    & \text{(by definition)} \\
    =& \frac{\pi' T \text{diag}(o_1)T\cdots T \text{diag}(o_{i-1}) T}
    {\pi^\top T\text{diag}(o_1)T\cdots T \mathbf{p}(o_{i-1})} P_O \\
    &\propto \mathbf{c}(o_1, \cdots, o_{i-1}; \pi') E \\
    =& \mathbf{p}(o_i \mid o_1, \cdots, o_{i-1}; \pi') \\
\end{align*}

Therefore, the steered conditional likelihood is equivalent to an HMM initiating from $\pi'$ (up to a normalization constant over vocabulary $\mathcal{O}$). By chaining the conditional likelihood functions, it is easy to see that the actual output sequence distribution is now:

\begin{align*}
    &p_{W, \text{normalized}}(o_1, \cdots, o_{L} ; \pi) \\
    =& \prod_{i=1}^L \text{normalize}_\mathcal{O}(\mathbf{p}_W(v_i \mid o_1, \cdots, o_{i-1}; \pi)) \\
    =& \prod_{i=1}^L \mathbf{p}(o_i \mid o_1, \cdots, o_{i-1}; \pi') \\
    =& p(o_1, \cdots, o_{L} ; \pi')
\end{align*}

This concludes our proof to Theorem~\ref{thm:switching_formal}.
\section{Implementation Details}
\label{appsec:implementation}

In this paper, we leverage the HuggingFace package\footnote{\url{https://huggingface.co}} and its model checkpoints. To implement \modelname{}, we simply wrap the \texttt{self.forward} function of language model transformer's \texttt{lm\_head}, and inject the computation formula of \modelname{}. In specific, the token logits are replaced w ith $\mathbf{c}^\top (I+\epsilon W) \mathbf{e}_o$, we change the computation order and first compute $\mathbf{c}' = \mathbf{c} +\epsilon W\mathbf{c})$, then compute $\mathbf{c}'^\top \mathbf{e}_o$. We find that this increases computational efficiency in practice and avoids the problem caused by many Transformers sharing input and output word embedding parameters in storage. Another trick we applied in experiments is that, as there is a systematical distribution shift between pre-training corpus and domain-specific dataset (such as detoxification dataset and reviews), we add one ``dummy'' steer $W_\text{dummy}$ to fill this overall distribution gap. Therefore, for positive label training, we use model $P_{\epsilon_0 (W+W_\text{dummy})}$, and for negative label training, we use model $P_{\epsilon_0 (-W+W_\text{dummy})}$. This is where the 3M parameters come from in Table~\ref{tab:efficiency}.

For optimization, we use Adam optimizer~\cite{kingma2014adam} with a 1e-2 learning rate and train for 1k steps. The steer matrix $W$ is initialized with a Gaussian distribution of 0 mean and 1e-3 variance. Across all experiments, we run three initial seeds of 0, 1, and 2 for training. When required to generate 25 sentences on each prompt, we use random seeds 0, 1, 2, ..., 24.  Our hardware is one single Tesla V-100 GPU with 16GB CUDA memory. 
\section{Hyperparameter Selection}
\label{appsec:ablation}

We select the decoding hyper-parameter based on a balance of task performance and generation quality on the detoxification task's dev set. The scores are listed in the table below.

\begin{table*}[!ht]
\centering
\linespread{1}

\begin{tabular}{lcccccc}
\toprule

\multirow{2}{*}{\textbf{steering value} $\epsilon$} & \multicolumn{2}{c}{\textbf{Toxicity}$\downarrow$} & \textbf{Fluency}$\downarrow$ & \multicolumn{2}{c}{\textbf{Diversity}$\uparrow$} \\
 & Avg. max. toxicity & Toxicity prob. & Output ppl. & Dist-1 & Dist-2 & Dist-3 \\

\midrule

GPT-2 (original)
& 0.527 & 0.520 & 25.45 & 0.58 & 0.85 & 0.85 \\

MuCoLa
& 0.308 & 0.088 & 29.92 & 0.55 & 0.82 & 0.83 \\

\midrule

\modelname{} ($\epsilon_0$)
& 0.542 & 0.560 & 24.20 & 0.54 & 0.85 & 0.86 \\

\modelname{} ($2\epsilon_0$)
& 0.473 & 0.388 & 24.54 & 0.54 & 0.84 & 0.85 \\

\modelname{} ($3\epsilon_0$)
& 0.419 & 0.278 & 24.83 & 0.54 & 0.84 & 0.85 \\

\modelname{} ($4\epsilon_0$)
& 0.393 & 0.232 & 25.43 & 0.54 & 0.83 & 0.84 \\

\modelname{} ($5\epsilon_0$)
& 0.370 & 0.198 & 26.37 & 0.54 & 0.83 & 0.84 \\

\modelname{} ($6\epsilon_0$)
& 0.343 & 0.172 & 27.53 & 0.54 & 0.82 & 0.83 \\

\modelname{} ($7\epsilon_0$)
& 0.320 & 0.138 & 29.12 & 0.54 & 0.81 & 0.82 \\

\modelname{} ($8\epsilon_0$)
& 0.306 & 0.118 & 31.32 & 0.54 & 0.80 & 0.81 \\

\bottomrule
\end{tabular}

\label{tab:ablation_epsilon}
\caption{Results on language model detoxification task dev set by selecting different steering value $\epsilon$.}

\end{table*}

When gradually increasing the steering value, the detoxification success rate increases while generation fluency decreases. To better balance the two ends, we select $5\epsilon_0$ for downstream evaluation, as it does not compromise perplexity too much while achieving decent task performance.
\section{Details of Transferring \modelname{} to Other Language Models}
\label{appsec:transfer_details}

To transfer an \modelname{} from one LM $M_1$ to another LM $M_2$, we notice that \modelname{} essentially adds one term $\mathbf{c}^\top W \mathbf{e}_o$ to the logits, where both $\mathbf{c}$ and $\mathbf{e}_o$ can be viewed as residing in word embedding space. Therefore, $W$ can be considered as a similarity matrix in $M_1$'s word embedding space. To use $W$ in $M_2$, we propose to map $M_2$'s word embedding space to that of $M_1$ before using $W$ as usual. The process works in 2 steps.

First, we identify a linear mapping from $W_2$ to $W_1$'s word embedding space. We start with building a list of anchor words. Specifically, we select the top 4k words shared by both vocabularies. We denote the token embedding matrices as $E_1', E_2'$ respectively. Then, we initialize a mapping $H$ with a Gaussian distribution of 1e-3 initial variance, and we apply Adam optimizer 0.01 learning rate for 5k steps. Secondly, After acquiring the mapping matrix $H$, we map both the context and embedding vectors to $H\mathbf{c}$ and $H\mathbf{e}_o$, respectively. So the additive term for LM $M_2$ is now $\mathbf{c}^\top H^\top WH \mathbf{e}_o$, which is equivalent to using a steer matrix $ H^\top WH$ for the LM $M_2$.

This mapping process is not precise, as word embeddings between LMS are not linearly associated.
So we observe an increased instability in generation if we use large $\epsilon$. Therefore, we reduce the steering value to 0.1 of its original scale, $0.5\epsilon_0$ for generation. This is the setting for getting results in Figure~\ref{fig:transfer}.
\section{Details of Investigating Interpretability}
\label{appsec:interpretability_details}

In Section~\ref{subsec:interpretability}, we interpret the weights learned in \modelname{} and list discovered keywords in Table~\ref{tab:interpretability}. A detailed description of getting these results is as follows. First, we conduct SVD decomposition of steer matrix $W$. The resulting $D$ matrix can then be interpreted as a ranked list of significant row vectors. We take the first 9 rows and compute their dot products with word embeddings.
As the row vector does not tell us which of the 2 directions indicates an increased probability, we select 20 tokens with top dot product and 20 tokens with bottom dot product as two candidate groups. Each group is concatenated to a text sequence and passed to Perspective API, and the group with a larger toxicity value is considered true ``keywords''. If, however, Perspective API recognizes the language as not English, which happened to rows No. 2, 4, and 6, then we discard this row as they contain mostly symbols and non-English words. Finally, we filter out suffix tokens, and the remaining keywords are listed in Table~\ref{tab:interpretability}.
\section{Validity of Assumptions}
\label{appsec:assumptions}

To verify the validity of the assumptions, we did an experiment for searching for valid HMMs while satisfying the assumption~\ref{assumption:representation}. It is trivial to construct valid $\Psi$ as long as a valid $\Phi$ can be found. So specifically, we set $d_s=20$ and $d_c=1$ to represent a one-conditional HMM. We let $n=200$ and randomly initialized $\Phi$ with Gaussian distribution with variance 1e-3. Then we construct the following objective function 
\begin{align*}
    \mathcal{L} = \mathcal{L}_\text{norm} + \mathcal{L}_\text{dist} + \mathcal{L}_\text{independence} +
    \mathcal{L}_\text{conditional},
\end{align*}

where 
\begin{align*}
    \mathcal{L}_\text{norm} =& \sum_i ( \sum_{s\in\mathcal{S}} \phi_{s, i}^2 - \frac{1}{dn} \sum_{s\in\mathcal{S}, i'} \phi_{s, i'}^2 )^2 \\
    \mathcal{L}_\text{dist} =& \sum_{s, s'} \max(-T(s, s'), 0) \\
    & +\sum_{s} (\sum_{s'} T(s, s') - 1)^2 \\
    \mathcal{L}_\text{independence} =& \sum_{i\ne j} (\sum_{s} \phi_{s, i} \phi_{s, j})^2 \\
    \mathcal{L}_\text{conditional} =& \sum_{i,j,k \text{not one value}, k\in[d_c+1, d]} \\
    & +(\sum_{s} \phi_{s, i} \phi_{s, j} \phi_{s, k})^2
\end{align*}

Generally, this objection characterizes the derivation of $\Phi$ from the assumptions. We use Adam optimizer with learning rate 1e-3, and ReduceLROnPlateau~\footnote{\url{https://pytorch.org/docs/stable/generated/torch.optim.lr_scheduler.ReduceLROnPlateau.html}} with patience 100 and reduce factor 0.5. The optimization process lasts 500k steps, starting from random seeds 0, 1, 2, and 3. On all random seeds, the objective function reduces from greater than 1 to less than 1e-5. This indicates that valid HMM solutions satisfy the assumption.
\section{Comparison with A (Soft) Word Blacklist}

\label{appsec:blacklist}

First, we explain that a control vector is equivalent to a SWB. This is because by adding a vector to context $c' = c +  \epsilon w$, we are equivalently adding a logit bias to each word: $c'^\top E = c^\top E + \epsilon w^\top E$, where $w^\top E$ is the static logit bias vector for each word. Then we point out that theoretically, \modelname{} is more expressive in representing sequence distributions than them, since \modelname{}’s formulation $c^\top (I+\epsilon W) E$ can let different words be preferred in different contexts $c$. Theoretically, there exists a \modelname{} for steering between any two finite-length distributions (with proof below). SWB cannot achieve this (a counterexample below). Intuitively speaking, a blacklist or whitelist uniformly applied at all positions cannot possibly achieve flexible control over the complex language distribution without hurting the generation quality. 

\subsection{Formal statement of the universality of \modelname{}}
Let $D_1$ and $D_2$ be two finite-length finite-vocabulary sequence distributions, there exists a context vector function $c(o_1, o_2, \cdots, o_i)$, a word embedding $E$ and a matrix $W$ so that an \modelname{} with $-W$ and $+W$ represents distribution $D_1$ and $D_2$ respectively.

\begin{proof}
We prove the existence by construction. Let $I(o_1, o_2, \cdots, o_i) \in \mathbb{N}$ be an arbitrary indexing function for all subsequences. With bounded subsequence length, $I$ values are also bounded by a number $d$. We let $c(o_1, \cdots, o_i) = \sum_o \text{onehot}(I(o_1, o_2, \cdots, o_i, o)) \in \mathbb{R}^d$. For each token $o$, the word embeddings is $e_o$ such that, for any subsequence $(o_1, o_2, \cdots, o_i)$, ${e_o}[I(o_1, o_2, \cdots, o_i, o))] = \frac{(D_1+D_2)(o | o_1, o_2, \cdots, o_i)}{2}$. Then $W$ is as follows: for any $(o_1, o_2, \cdots, o_i)$ and token $o$, $W[I(o_1, o_2, \cdots, o_i, o), I(o_1, o_2, \cdots, o_i, o)] = \frac{(D_2-D_1)(o | o_1, o_2, \cdots, o_i)}{(D_1+D_2)(o | o_1, o_2, \cdots, o_i)}$. It is diagonal. We omit verification due to space limits.
\end{proof}

\subsection{3.2 Construction of a counterexample for SWB}

Let $D_1$ be a one-point distribution on sequence ``AB'', and $D_2$ be a uniform distribution on sequences {``BA'', ``AB''}. For any language model representing $D_1$, there does not exist an SWB that can convert the language model into $D_2$.
Verification of this counterexample is trivial, and we omit it due to space limits.

\section{Results of \modelname{} on Pythia Family}
\label{appsec:pythia}

Pythia~\footnote{\url{https://github.com/EleutherAI/pythia}} is a family of causal language models developed by EleutherAI. Raining in size from 14M to 2.8B, these models provide an excellent testbed for evaluating the effect of language model sizes. The table below shows the performance of \modelname{} applied to the Pythia language models. We can see a trend of better fluency but decreasing detoxification when the model size increases, indicating a greater controlling difficulty and better base quality of larger language models. 

\begin{table*}[ht!]
\centering
\linespread{1}

\begin{tabular}{lcccccc}
\toprule

\multirow{2}{*}{\textbf{Model}} & \multicolumn{2}{c}{\textbf{Toxicity}$\downarrow$} & \textbf{Fluency} & \multicolumn{2}{c}{\textbf{Diversity}$\uparrow$} \\
 & Max. toxicity & Toxicity prob. & Output ppl.$\downarrow$ & Dist-1 & Dist-2 & Dist-3 \\

\midrule

Pythia-14M & 0.208 & 0.04 & 85.67 & 0.50 & 0.84 & 0.86 \\
Pythia-70M & 0.213 & 0.06 & 54.84 & 0.55 & 0.86 & 0.87 \\
Pythia-160M & 0.223 & 0.07 & 35.24 & 0.54 & 0.86 & 0.87 \\
Pythia-410M & 0.255 & 0.13 & 36.71 & 0.58 & 0.86 & 0.86 \\
Pythia-1B & 0.286 & 0.17 & 31.34 & 0.56 & 0.85 & 0.86 \\
Pythia-1.4B & 0.289 & 0.15 & 31.63 & 0.58 & 0.86 & 0.86 \\
Pythia-2.8B & 0.328 & 0.17 & 32.90 & 0.51 & 0.81 & 0.85 \\

\bottomrule

\vspace{-4mm}

\end{tabular}

\caption{Language model detoxification results of \modelname{} with Pythia}
\vspace{-1mm}
\label{tab:pythia}
\end{table*}
\section{LoRA Configuration}

Thanks for suggesting a more comprehensive evaluation. We use the Huggingface Transformers’ default implementation of LoRA on GPT-2-large to align with standard practices in the field and compare with our method. To test the effect of configurations, we iterate the LoRA rank over 8, 16, 32, and 64. Following the practice of \cite{lee2023platypus}, we set the alpha scalar equal to the rank. We report LoRA’s performance below.

\begin{table*}[ht]
\centering
\begin{tabular}{ccccccc}
\toprule
LoRA rank & Max. Toxicity & Toxicity prob. & ppl. & Dist-1 & Dist-2 & Dist-3 \\
\midrule
8 & 0.362 & 0.257 & 23.83 & 0.532 & 0.839 & 0.852 \\
16 & 0.365 & 0.210 & 21.11 & 0.534 & 0.845 & 0.855 \\
32 & 0.351 & 0.229 & 26.13 & 0.529 & 0.840 & 0.853 \\
64 &0.354 & 0.257 & 23.78 & 0.531 & 0.840 & 0.853 \\
\bottomrule
\end{tabular}
\caption{The performance of LoRA with different ranks on the detoxification dataset.}
\label{tab:lora_rank}
\end{table*}
\section{Incorporating in LoRA}
\label{appsec:plus_lora}
Thanks to its theoretical foundations, \modelname{} is orthogonal to other methods and is intuitively compostable with other control methods. As an example study, we select LoRA to combine with \modelname{} on detoxification with GPT2-large as the backbone. The result is as follows. Combining LoRA with \modelname{} produces a better detoxification score than LoRA alone (although not as good as \modelname{} alone), at the cost of a degraded quality score.

\begin{table*}[h]
\centering
\begin{tabular}{ccccccc}
\toprule
\textbf{Method} & \textbf{Max Toxicity} & \textbf{Toxicity prob} & \textbf{ppl} & \textbf{Dist-1} & \textbf{Dist-2} & \textbf{Dist-3} \\
\hline
LoRA & 0.365 & 0.210 & 21.11 & 0.53 & 0.85 & 0.86 \\
\modelname{} & 0.238 & 0.078 & 28.26 & 0.55 & 0.84 & 0.84 \\
LoRA + \modelname{} & 0.258 & 0.086 & 41.78 & 0.54 & 0.82 & 0.83 \\
\bottomrule
\end{tabular}
\caption{Results on incorporating \modelname{} with LoRA.}
\label{tab:incorporate-lora}
\end{table*}
\section{Results on GPT-J-6B}
\label{sec:gpt-j}
We go on and evaluate \modelname{} on GPT-J-6B model. We use the same evaluation setup as in the main body. The results are shown in Table~\ref{tab:gptj}. We observe that \modelname{} is able to reduce the toxicity of the generations while maintaining the perplexity and the fluency of the generations. We also observe that the diversity of the generations is not affected by \modelname{}.

\begin{table*}[h]
\centering
\begin{tabular}{lcccccc}
\toprule
\textbf{Method} & \textbf{Max Toxicity} & \textbf{Toxicity prob} & \textbf{ppl} & \textbf{Dist-1} & \textbf{Dist-2} & \textbf{Dist-3} \\
\midrule
GPT-J-6B & 0.364 & 0.229 & 18.70 & 0.55 & 0.84 & 0.85 \\
GPT-J-6B + \modelname{} & 0.265 & 0.124 & 18.26 & 0.54 & 0.84 & 0.85 \\
\bottomrule
\end{tabular}
\caption{Results of \modelname{} on GPT-J-6B model on language model detoxification.}
\label{tab:gptj}
\end{table*}

\section{Effect of \modelname{} on Instruction Following}
\label{appsec:plus_instruction}
We study the effect of \modelname{} on the LMs' performance under prompts. For prompts, we let ChatGPT write 10 strongly positively biased prompts in Table~\ref{tab:instructions}.
Then, we let GPT-J generate 25 tokens under instructions (i.e., prompting) and compare its generation with and without \modelname{}.

\begin{table*}[h]
\centering
\begin{tabular}{p{140mm}}
\toprule
``Compose a critique of the novel 'Pride and Prejudice', emphasizing its flaws while maintaining an overall appreciative tone.`` \\
\hline
``Write a review of the restaurant 'Le Gourmet' focusing on areas for improvement, yet with an underlying tone of admiration for its cuisine.`` \\
\hline
``Pen a critique about the summer season, pointing out its drawbacks but in a way that overall celebrates its beauty and warmth.`` \\
\hline
``Draft a review for the TV show 'Breaking Bad', highlighting its weaker aspects but still expressing overall enthusiasm for the series.`` \\
\hline
``Create a review of the coffee shop 'Bean There', discussing its shortcomings while still conveying a sense of enjoyment of its atmosphere.`` \\
\hline
``Write an appraisal of the classic album 'Abbey Road' by The Beatles, noting any perceived faults but with a tone that remains reverent of its musical genius.`` \\
\hline
``Offer a critique of the play 'Hamilton', focusing on its less successful elements while still acknowledging its groundbreaking impact.`` \\
\hline
``Compose a review of the city of Paris in winter, pointing out the challenges of the season while still capturing the magic of the city during this time.`` \\
\hline
``Draft a review of the novel '1984' by George Orwell, discussing its more controversial or challenging aspects but in a context of overall admiration.`` \\
\hline
``Write an evaluation of the gaming console PlayStation 5, noting its limitations or flaws while still expressing enthusiasm for its technological advancements.`` \\
\bottomrule
\end{tabular}
\caption{Instructions proposed by ChatGPT.}
\label{tab:instructions}
\end{table*}

We see that \modelname{} can still steer generation even under a positive prompt while maintaining generation fluency.
Without \modelname{}, the sentiment is 94.00.
With a positive \modelname{}, the average sentiment increases to 97.20.
With negative \modelname{} steering, the average sentiment decreases significantly to 82.80. Example generation are listed in Table~\ref{tab:instructions_gen_vanilla} and \ref{tab:instructions_gen_steered}.

\begin{table*}[t]
\centering
\begin{tabular}{p{140mm}}
\toprule
``Focus of your critique on its plot and characterization rather than its historical accuracy and literary style.''\\
\hline
``(For example: not enough bread, or service was bad, or the bathrooms are dirty, or ...'' \\
\hline
``In the early weeks of the season, most of the country was in the grip of an intense summer.'' \\
\hline
``This is a good show for all the time it spends on drug dealing but the characters are'' \\
\bottomrule
\end{tabular}
\caption{Example generations (one each for the first 4 prompts) without \modelname{}}
\label{tab:instructions_gen_vanilla}
\end{table*}

\begin{table*}[t]
\centering
\begin{tabular}{p{140mm}}
\toprule
`` the novel ` fails to persuade the reader that the characters'in every sense of the verb ...'' \\
\hline
``The rest of the review is up to you. you should probably mention how long you've...'' \\
\hline
``Tt's difficult to redeem the negative factors of summers, except for its  unpredictability.'' \\
\hline
``Dull. It's a cheap improv comedy that's almost utterly incoherent in an...'' \\
\bottomrule
\end{tabular}
\caption{Example generations (one each for the first 4 prompts) with negative \modelname{}}
\label{tab:instructions_gen_steered}
\end{table*}
\section{Embedding Tuning}
\label{appsec:plus_tuning}
We also consider the possibility of tuning the word embeddings of the backbone model. We use the same training procedure as \modelname{}, but instead of tuning the last layer, we tune the word embeddings. We use the same hyperparameters as \modelname{}. The results are shown in Table~\ref{tab:embedding-tuning}. We observe that the performance is comparable to Soft-Blacklist, but is still inferior to \modelname{}. We believe that this is because the word embeddings are shared across all the layers, and tuning the word embeddings may cause the model to forget the knowledge learned from the pre-training. We will leave the investigation of this direction for future work. Embedding-tuning is also more expensive than \modelname{}, as it requires tuning the word embeddings for the entire vocabulary, while \modelname{} only requires tuning the last layer.

\begin{table*}[h]
\centering
\begin{tabular}{ccccccc}
\toprule
\textbf{Method} & \textbf{Max Toxicity} & \textbf{Toxicity prob} & \textbf{ppl} & \textbf{Dist-1} & \textbf{Dist-2} & \textbf{Dist-3} \\
\midrule
Embedding Tuning & 0.289 & 0.0952 & 20.41 & 0.53 & 0.84 & 0.85 \\
\bottomrule
\end{tabular}
\caption{Results on embedding tuning.}
\label{tab:embedding-tuning}
\end{table*}

\end{document}